\newcommand{\R}{\mathbb{R}}
\newcommand{\Geqt}{G_{\Delta}}
\title{Reconfiguration and Locomotion with Joint Movements in the Amoebot Model} %TODO Please add
\author{Andreas Padalkin}{Paderborn University, Germany}{andreas.padalkin@upb.de}{https://orcid.org/0000-0002-4601-9597}{}
\author{Manish Kumar}{New Jersey Institute of Technology, US}{manish.kumar@njit.edu}{https://orcid.org/0000-0003-0620-3303}{}
\author{Christian Scheideler}{Paderborn University, Germany}{scheideler@upb.de}{https://orcid.org/0000-0002-5278-528X}{}
\authorrunning{A. Padalkin, M. Kumar, and C. Scheideler} %TODO mandatory. First: Use abbreviated first/middle names. Second (only in severe cases): Use first author plus 'et al.'
\keywords{amoebot model, programmable matter, modular robot system, reconfiguration, locomotion} %TODO mandatory; please add comma-separated list of keywords
\begin{document}

\maketitle

%TODO mandatory: add short abstract of the document
\begin{abstract}
    We are considering the geometric amoebot model where a set of $n$ \emph{amoebots} is placed on the triangular grid. An amoebot is able to send information to its neighbors, and to move via expansions and contractions. Since amoebots and information can only travel node by node, most problems have a natural lower bound of $\Omega(D)$ where $D$ denotes the diameter of the structure. Inspired by the nervous and muscular system, Feldmann et al.\ have proposed the \emph{reconfigurable circuit extension} and the \emph{joint movement extension} of the amoebot model with the goal of breaking this lower bound.

    In the joint movement extension, the way amoebots move is altered. Amoebots become able to push and pull other amoebots. Feldmann et al.\ demonstrated the power of joint movements by transforming a line of amoebots into a rhombus within $O(\log n)$ rounds. However, they left the details of the extension open. The goal of this paper is therefore to formalize and extend the joint movement extension. In order to provide a proof of concept for the extension, we consider two fundamental problems of modular robot systems: \emph{reconfiguration} and \emph{locomotion}.

    We approach these problems by defining meta-modules of rhombical and hexagonal shape, respectively. The meta-modules are capable of movement primitives like sliding, rotating, and tunneling. This allows us to simulate reconfiguration algorithms of various modular robot systems. Finally, we construct three amoebot structures capable of locomotion by rolling, crawling, and walking, respectively.
\end{abstract}

% \newpage
% \tableofcontents

% \clearpage

%%%%%%%%%%%%%%%%%%%%%%
% INTRODUCTION BEGIN %
%%%%%%%%%%%%%%%%%%%%%%

\section{Introduction}

Programmable matter consists of homogeneous nano-robots that are able to change the properties of the matter in a programmable fashion, e.g., the shape, the color, or the density \cite{DBLP:journals/ijhsc/ToffoliM93}.
We are considering the geometric amoebot model \cite{DBLP:series/lncs/DaymudeHRS19,DBLP:conf/wdag/DaymudeRS21,DBLP:conf/spaa/DerakhshandehDGRSS14} where a set of $n$ nano-robots (called \emph{amoebots}) is placed on the triangular grid.
An amoebot is able to send information to its neighbors, and to move by first expanding into an unoccupied adjacent node, and then contracting into that node.
Since amoebots and information can only travel node by node, most problems have a natural lower bound of $\Omega(D)$ where $D$ denotes the diameter of the structure.
Inspired by the \emph{nervous} and \emph{muscular system}, Feldmann et al.~\cite{DBLP:journals/jcb/FeldmannPSD22} proposed the \emph{reconfigurable circuit extension} and the \emph{joint movement extension} with the goal of breaking this lower bound.

In the reconfigurable circuit extension, the amoebot structure is able to interconnect amoebots by \emph{circuits}.
Each amoebot can send a primitive signal on circuits it is connected to.
The signal is received by all amoebots connected to the same circuit.
Among others, Feldmann et al.~\cite{DBLP:journals/jcb/FeldmannPSD22} solved the leader election problem, compass alignment problem, and chirality agreement problem within $O(\log n)$ rounds w.h.p.\footnote{An event holds with high probability (w.h.p.) if it holds with probability at least $1 - 1/n^c$, where the constant $c$ can
be made arbitrarily large.}
These problems will be important to coordinate the joint movements.
Afterwards, Padalkin et al.~\cite{DBLP:conf/podc/PadalkinS24,DBLP:journals/nc/PadalkinSW24} proposed polylogarithmic time algorithms to various problems including spanning tree computation, symmetry detection, and shortest path computation.
% Recently, Emek et al.\ \cite{DBLP:conf/wdag/EmekGH24} generalized reconfigurable circuits to arbitrary graphs.
% \ap{\cite{DBLP:journals/corr/abs-2501-16892}}

In the joint movement extension, the way amoebots move is altered.
% Amoebots become able to push and pull other amoebots.
In a nutshell,
an expanding amoebot is capable of pushing other amoebots away from it,
and a contracting amoebot is capable of pulling other amoebots towards it.
Feldmann et al.~\cite{DBLP:journals/jcb/FeldmannPSD22} demonstrated the power of joint movements by transforming a line of amoebots into a rhombus within $O(\log n)$ rounds.
However, they left the details of the extension open.
The goal of this paper is therefore to formalize and extend the joint movement extension.
In order to provide a proof of concept for the extension, we consider two fundamental problems of modular robot systems (MRS): reconfiguration and locomotion.
We study these problems from a centralized view to explore the limits of the extension.
% However, our high-level solutions can be implemented in amoebots using constant\footnote{For the reconfiguration problem, we assume that the target shape has constant complexity.} memory and reconfigurable circuits.
% Note that the amoebot model can in principle be realized by active tile-based self-assembly \cite{DBLP:conf/dna/AlumbaughDDPR19}.

In the reconfiguration problem, an MRS has to reconfigure its structure into a given shape.
Examples for reconfiguration algorithms in the original amoebot model can be found in \cite{DBLP:conf/nanocom/DerakhshandehGR15,DBLP:conf/spaa/DerakhshandehGR16,DBLP:conf/dna/KostitsynaSW22,DBLP:journals/dc/LunaFSVY20}.
However, all of these are subject of the aforementioned natural lower bound.
To our knowledge, polylogarithmic time solutions were found for two types of MRSs: in the nubot model \cite{DBLP:conf/innovations/WoodsCGDWY13} and crystalline atom model \cite{DBLP:conf/isaac/AloupisCDLAW08}.
We will show that in the joint movement extension, the amoebots are able to simulate the reconfiguration algorithm for the crystalline atom model, and with that break the lower bound of the original amoebot model.
% In fact, we will show that the amoebots are able to simulate various other MRSs.

In the locomotion problem, an MRS has to move along an even surface as fast as possible.
We might also ask the MRS to transport an object along the way.
In the original amoebot model, one would use the spanning tree primitive to move along the surface \cite{DBLP:series/lncs/DaymudeHRS19}.
However, we only obtain a constant velocity with that.
Furthermore, the original amoebot model does not allow us to transport any objects.
In terrestrial environments, there are three basic types of locomotion: rolling, crawling, and walking \cite{hirose1991three,DBLP:conf/embc/KehoeP19}.
For each of these locomotion types, we will present an amoebot structure.

\section{Preliminaries}

% In this section, we introduce the geometric amoebot model \cite{DBLP:conf/spaa/DerakhshandehDGRSS14} and formalize the joint movement extension.
% We utilize the standardized notation of the canonical amoebot model recently proposed by Daymude et al.~\cite{DBLP:conf/wdag/DaymudeRS21}.
In this section, we introduce the geometric amoebot model \cite{DBLP:conf/wdag/DaymudeRS21,DBLP:conf/spaa/DerakhshandehDGRSS14} and formalize the joint movement extension.

%%%%%%%%%%%%%%%%%%%%%%%%%%%%%%%%%
% GEOMETRIC AMOEBOT MODEL BEGIN %
%%%%%%%%%%%%%%%%%%%%%%%%%%%%%%%%%

\subsection{Geometric Amoebot Model}
\label{sec:model:amoebot}

In this section, we introduce the \emph{geometric amoebot model} \cite{DBLP:conf/wdag/DaymudeRS21}.
We slightly deviate from the original model to make it more suitable to our extension.
A set of $n$ amoebots is placed on the infinite regular triangular grid graph $\Geqt = (V_\Delta, E_\Delta)$ (see \Cref{fig:model:classic}).
An amoebot is an anonymous, randomized finite state machine in the form of a line segment.
The endpoints may either occupy the same node or two adjacent nodes.
If the endpoints occupy the same node, the amoebot has length $0$ and is called \emph{contracted} and otherwise, it has length $1$ and is called \emph{expanded}.
Every node of $\Geqt$ is occupied by at most one amoebot.
Two endpoints of different amoebots that occupy adjacent nodes in $\Geqt$ are connected by \emph{bonds} (red edges).
Let $V \subseteq V_\Delta$ be the nodes occupied by the amoebots.
We define graph $S = \Geqt|_V$ induced by $V$ to be the \emph{connectivity graph} of the amoebot structure.
Initially, $S$ is connected and we require that $S$ stays connected at all times.
% Let the \emph{amoebot structure} $S \subseteq V_\Delta$ be the set of nodes occupied by the amoebots.
% We say that $S$ is \emph{connected} if and only if $G_S$ is connected, where $G_S = \Geqt|_S$ is the graph induced by $S$.
% Initially, $S$ is connected.

\begin{figure}[tb]
    \centering
    \includegraphics[page=1]{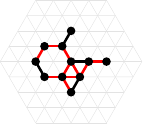}
    \caption{
        Amoebot structure.
        The amoebots are shown in black.
        Red edges indicate bonds.
    }
    \label{fig:model:classic}
\end{figure}

An amoebot can move through expansions and contractions.
A contracted amoebot occupying node $u$ can \emph{expand} into an unoccupied adjacent node $v$ (see the blue amoebot in \Cref{fig:classic:expansion}).
Thereafter, it occupies both, $u$ and $v$.
An expanded amoebot occupying nodes $v$ and $w$ can \emph{contract} into one of its occupied nodes (see the green amoebot in \Cref{fig:classic:contraction}).
Thereafter, it only occupies that node.
Furthermore, let a contracted amoebot $x$ occupy node $u$ and an expanded amoebot $y$ occupy nodes $v$ and $w$ such that $u$ and $v$ are adjacent.
These amoebots can perform a \emph{handover} where $x$ expands into $v$ and $y$ contracts into $w$ simultaneously (see \Cref{fig:classic:handover} where $x$ is marked in blue and $y$ is marked in green).
Thereafter, $x$ occupies $u$ and $v$, and $y$ occupies $w$.

We refer to \cite{DBLP:conf/wdag/DaymudeRS21} for more details.

\begin{figure}[tb]
    \centering
    \begin{subfigure}[b]{.3\textwidth}
        \centering
        \includegraphics[page=1]{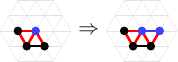}
        \caption{Expansion.}
        \label{fig:classic:expansion}
    \end{subfigure}
    \hfill
    \begin{subfigure}[b]{.3\textwidth}
        \centering
        \includegraphics[page=1]{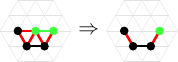}
        \caption{Contraction.}
        \label{fig:classic:contraction}
    \end{subfigure}
    \hfill
    \begin{subfigure}[b]{.3\textwidth}
        \centering
        \includegraphics[page=1]{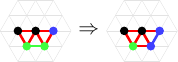}
        \caption{Handover.}
        \label{fig:classic:handover}
    \end{subfigure}
    \caption{
        Movements in the classical model.
        Red edges indicate bonds.
        Blue amoebots are expanding.
        Green amoebots are contracting.
    }
    \label{fig:classic:movements}
\end{figure}

%%%%%%%%%%%%%%%%%%%%%%%%%%%%%%%
% GEOMETRIC AMOEBOT MODEL END %
%%%%%%%%%%%%%%%%%%%%%%%%%%%%%%%

%%%%%%%%%%%%%%%%%%%%%%%%%%%%%%%%%%
% JOINT MOVEMENT EXTENSION BEGIN %
%%%%%%%%%%%%%%%%%%%%%%%%%%%%%%%%%%

\subsection{Joint Movement Extension}

In the \emph{joint movement extension} \cite{DBLP:journals/jcb/FeldmannPSD22}, the way the amoebots move is altered.
The idea behind the extension is to allow amoebots to push and pull other amoebots.
In the following, we formalize the joint movement extension.

%%%

We first redefine the connectivity graph of an amoebot structure.
In the graph, we represent each amoebot and bond by an edge.
Let $A$ denote the set of all edges representing amoebots, $V$ the set of all their endpoints, and $B$ the set of edges representing bonds between these endpoints.
The edges of $A$ are pairwise disjoint.
We define the \emph{connectivity graph} of an amoebot structure as the graph $S = (V, A \cup B)$.
%\footnote{Note that we redefine the amoebot structure $S$ here. In the original model, the amoebot structure $S \subseteq V_\Delta$ was defined as the set of occupied nodes while in our extension, the amoebot structure $S = (V, A \cup B)$ is defined as a graph where each edge has an orientation and length.}

Note that the connectivity graph does not contain any information about the positions of the amoebots in the amoebot structure.
Each amoebot contributes one edge to $A$ and two nodes to $V$ even if it is in a contracted state.

To each edge in $A \cup B$, we assign a geometric orientation and a length, such that we obtain line segments.
We only allow orientations in parallel to the axes of the triangular grid $\Geqt$.
Each amoebot has a length in $[0,1]$ and each bond has a length of $1$.
We consider two edges in $A \cup B$ as adjacent if and only if they share a common endpoint or they are connected by another edge of length $0$ (bonds connected by a contracted amoebot).
% \ap{This definition of adjacency may have some problems.}
The \emph{amoebot structure} is defined by its connectivity graph and the assignment of orientations and lengths.
For the sake of simplicity, we will denote each amoebot structure by its connectivity graph.

% Let $L$ denote the set of all line segments (amoebots), $V$ the set of all their endpoints, and $B$ the set of edges (bonds) between these endpoints with $L \cap B = \emptyset$.
% \ap{Variable names?}
% Each amoebot and bond has an orientation and a length.
% We only allow orientations in parallel to the axes of the triangular grid $\Geqt$.
% Each amoebot has a length in $[0,1]$ and each bond has a length of $1$.
% We define the \emph{amoebot structure} as the graph $S = (V, L \cup B)$.\footnote{Note that we redefine the amoebot structure $S$ here. In the original model, the amoebot structure $S \subseteq V_\Delta$ was defined as the set of occupied nodes while in our extension, the amoebot structure $S = (V, L \cup B)$ is defined as a graph where each edge has an orientation and length.}
% We consider two edges in $L \cup B$ as adjacent if and only if they share a common endpoint or they are connected by another edge of length $0$.

We call an amoebot structure \emph{connected} if and only if $S$ is connected.
We call an amoebot structure \emph{valid} if and only if we can embed the graph in the plane $\R^2$ in compliance with the orientations and lengths of all amoebots and bonds (i.e., $A \cup B$) such that only adjacent edges intersect.
% at their endpoints.
Note that an embedding is unique except for translations if and only if it is connected.
We call a valid amoebot structure \emph{triangular} if and only if the embedding can be aligned with the triangular grid.
Note that this is the case if and only if each amoebot in $A$ has either length $0$ or $1$.
We call a valid amoebot structure \emph{complete} if and only if there is no other valid amoebot structure $S' = (V, A \cup B')$ with $B \subsetneq B'$.
We assume that initially, the amoebot structure $S_0 = (V, A \cup B_0)$ is connected, valid, triangular, and complete.

%%%

We assume the fully synchronous activation model, i.e., the time is divided into synchronous rounds, and every amoebot is active in each round.
Furthermore, we make the idealistic assumption that all movements start at the same time and are performed at the same speed.
W.l.o.g., we assume that all movements occur within the time period $[0,1]$.

%%%

%%% FIRST STEP

Let $S_i = (V, A \cup B_i)$ be an connected, valid, triangular, and complete amoebot structure.
Joint movements are performed in four steps.
In the first step,
the amoebots remove bonds from $S_i$ as follows.
Each amoebot can decide to release an arbitrary subset of its currently incident bonds in $S_i$.
A bond is removed if and only if either of the amoebots at the endpoints releases the bond.
% However, an expanded amoebot cannot release the bond connecting its occupied nodes.
Note that we only remove bonds (i.e., edges in $B_i$) but no amoebots (i.e., edges in $A$).
Let $R_i \subseteq B_i$ be the set of the remaining bonds and $S^0_i = (V, A \cup R_i)$ the resulting amoebot structure.

We require that $S^0_i$ is connected since otherwise, disconnected parts might float apart.
We say that a \emph{connectivity conflict} occurs if and only if $S^0_i$ is not connected.
Whenever a connectivity conflict occurs, the amoebot structure transitions into an undefined state such that we become unable to make any statements about the structure.
Note that $S^0_i$ is still valid and triangular since $S^0_i$ is a subgraph of $S_i$.
However, $S^0_i$ is not complete anymore if at least one bond was removed.

%%% SECOND STEP

%%% CONTRACTION/EXPANSION

In the second step, each amoebot may perform one of the following movements.
A contracted amoebot may expand on one of the axes as follows (see blue amoebot in \Cref{fig:expansion}).
At $t = 0$,
the amoebot can reorientate itself and reassign each of its incident bonds to one of its endpoints.
Bonds assigned to an endpoint will stay with that endpoint as the amoebot expands.
At $t \in [0,1]$,
the amoebots has a length of $t$.
In the process,
the incident bonds do not change their orientations or lengths.
As a result, the expanding amoebot pushes all connected amoebots.
An expanded amoebot may contract analogously by reversing the contraction (see green amoebot in \Cref{fig:contraction}).
Thereby, it pulls all connected amoebots.

\begin{figure}[tb]
    \centering
    \begin{subfigure}[b]{.45\textwidth}
        \centering
        \includegraphics[page=1]{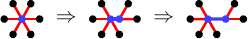}
        \caption{Expansion.}
        \label{fig:expansion}
    \end{subfigure}
    \hfill
    \begin{subfigure}[b]{.45\textwidth}
        \centering
        \includegraphics[page=1]{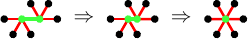}
        \caption{Contraction.}
        \label{fig:contraction}
    \end{subfigure}
    \caption{
        Movements in the extension.
        Red edges indicate bonds.
        Blue amoebots are expanding.
        Green amoebots are contracting.
        The figures show the movements in $0.5$ time steps.
    }
    \label{fig:movements}
\end{figure}

%%% STRUCTURAL CONFLICTS AND COLLISIONS

Let $S^t_i = (V, A \cup R_i)$ be the amoebot structure at $t \in [0,1]$.
Note that for all $t \in [0,1]$, $S^t_i$ is connected since $S^0_i$ is connected.
We require that $S^t_i$ is valid for all $t \in [0,1]$.
If there is a $t \in (0,1]$ such that $S^t_i$ is not valid, the amoebot structure transitions into an undefined state such that we become unable to make any statements about the structure.
We distinguish between two cases.
First, we say that a \emph{structural conflict} occurs if and only if we cannot embed $S^t_i$ in the plane, i.e., the amoebots are not able to maintain their relative positions (see \Cref{fig:conflict}).
Second, we say that a \emph{collision} occurs if and only if we can embed $S^t_i$ in the plane, but there is a forbidden intersection, i.e., two non-adjacent edges intersect or two adjacent edges intersect at a point which is not their endpoints (see \Cref{fig:collision}).

\begin{figure}[tb]
    \centering
    \includegraphics[page=1]{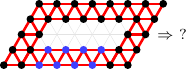}
    \caption{
        Structural conflict.
        We are not able to expand the blue amoebots horizontally without tearing up the amoebot structure.
        Hence, the expansions cause a structural conflict and the amoebot structure transitions into an undefined state.
        % Structural conflicts and mappings.
        % In the left figure, the expansions cause a structural conflict.
        % Hence, the amoebot structure transitions into an undefined state.
        % In the right figure, the amoebot structure can be mapped onto the triangular grid after the movement.
    }
    \label{fig:conflict}
\end{figure}

\begin{figure}[tb]
    \centering
    \includegraphics[page=1]{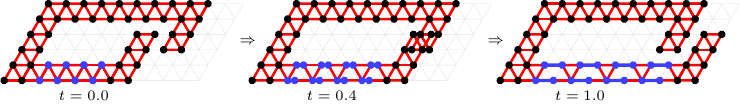}
    \caption{
        Collision.
        Initially, we have a valid amoebot structure given ($t = 0$).
        We expand the blue amoebots horizontally.
        Note that the amoebot structure is also valid at $t = 1$.
        % At $t = 1$, the amoebot structure could be mapped on $\Geqt$.
        However, for $t \in [0.25, 0.75]$, parts of the amoebot structure collide.
        Hence, the amoebot structure transitions into an undefined state.
    }
    \label{fig:collision}
    % \caption{
    %     Joint movements.
    %     Red lines indicate bonds.
    %     Blue amoebots are expanding horizontally.
    % }
\end{figure}

The detection of structural conflicts and collisions is not within the scope of this paper simply because we only consider movements where structural conflicts and collisions cannot occur.
We refer to \cite{DBLP:conf/algosensors/GuptaKMP24} for more details.

Suppose that $S^t_i$ is valid for all $t \in [0,1]$.
Note that for all $t \in (0,1)$, $S^t_i$ is no longer triangular if there is at least one expansion or contraction.
At $t = 1$, we obtain an amoebot structure $S^1_i$ which is triangular since each amoebot in $A$ has length $0$ or $1$ again.
However, $S^1_i$ is not necessarily complete.

Each arrow in the figures of \Cref{sec:meta,sec:reconfiguration,sec:motion,sec:conclusion} indicates the execution of expansions and contractions.
The left side shows $S^0_i$ (i.e., the amoebot structure after the removal of bonds in the first step), and the right side $S^1_i$ (i.e., the amoebot structure after the execution of movements in the second step).
% We may chain multiple rounds if we do not change bonds, i.e., if $S^1_i = S^0_{i+1}$.

%%% FOURTH STEP

In the third step,
the amoebots reestablish bonds until the amoebot structure becomes complete again.
Let $B'_i$ be the set of bonds afterwards.
We obtain the amoebot structure $S'_{i} = (V, A \cup B'_{i})$.
Note that $S'_{i}$ is connected, valid, triangular, and complete.

%%% THIRD STEP

In the last step,
pairs of amoebots may perform isolated handovers as follows.
Note that each amoebot can participate in only one handover.

Consider a contracted amoebot $x$ with endpoints $x_1$ and $x_2$ and an expanded amoebot $y$ with endpoints $y_1$ and $y_2$, which are connected by a bond $b$ with endpoints $x_2$ and $y_2$ (see \Cref{fig:handover} where $x$ is marked in blue and $y$ is marked in green).
Intuitively, we want to switch the association of the expanded amoebot and the bond.
More precisely,
(i) amoebot $x$ becomes expanded (i.e., it has length $1$) and it changes its orientation to the one of the bond $b$,
(ii) amoebot $y$ becomes contracted (i.e., it has length $0$),
(iii) bond $b$ changes its orientation to the one of the previously expanded amoebot,
(iv) all bond previously adjacent to $x_2$ become adjacent to $x_1$, and
(v) all bonds previously adjacent to $y_2$ except for $b$ become adjacent to $x_2$.
% Note that in the embedding, endpoint $y_2$ moves from the position of $y_1$ to the previous position of $x_2$, and endpoint $x_2$ moves to the position of $x_1$.

Let $B_{i+1}$ be the set of bonds after the handovers and $S_{i+1} = (V, A \cup B_{i+1})$ the resulting amoebot structure.
By construction of the handovers, $S_{i+1}$ has the same properties as $S'_i$, i.e., it is connected, valid, triangular, and complete.
We call a movement \emph{valid} if and only if we are able to perform all four steps, i.e., the amoebot structure did not transition into an undefined state.

\begin{figure}[tb]
    \centering
    \includegraphics[page=1]{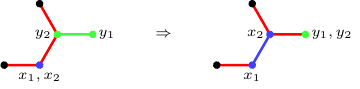}
    \caption{
        Handover.
        Red edges indicate bonds.
        The blue amoebot becomes expanded.
        The green amoebot becomes contracted.
    }
    \label{fig:handover}
\end{figure}

\begin{remark}
    We have to include the handover to ensure compatibility with algorithms for the original model and universality of the model since otherwise, it would not be possible to move through a narrow tunnel.
    % Further, the handover is not allowed to move any other amoebots, i.e., it must be an isolated local operation.
    However, it is not possible to define the handover as a movement in the second step without breaking previous assumptions.

    If both amoebots performed their movements simultaneously, the bond between them would change its orientation and length during the movement (see \Cref{fig:handover:simultaneous} where $x$ is marked in blue and $y$ is marked in green).
    Also, note that we cannot prevent the contracting amoebot from pulling all other incident amoebots with it such that the handover would not be an isolated local operation.
    This may result in structural conflicts or collisions.

    If the amoebots performed their movements sequentially, they would have to move faster than amoebots performing an independent movement, and they would intersect during the movement (see \Cref{fig:handover:sequential} where $x$ is marked in blue and $y$ is marked in green).

    Since all these definitions of handovers are breaking our previous assumptions, we have decided to not include handovers as a movement in the second step and include an extra step for them.
    We will refrain from using them in all of our following constructions.
\end{remark}

\begin{figure}[tb]
    \centering
    \begin{subfigure}[b]{\textwidth}
        \centering
        \includegraphics[page=1]{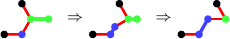}
        \caption{Simultaneous handover. The second shows a handover in $0.5$ time steps.}
        \label{fig:handover:simultaneous}
    \end{subfigure}

    \medskip

    \begin{subfigure}[b]{\textwidth}
        \centering
        \includegraphics[page=1]{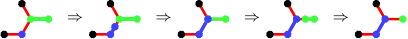}
        \caption{Sequential handover. The figure shows a handover in $0.25$ time steps.}
        \label{fig:handover:sequential}
    \end{subfigure}
    \caption{
        Possible implementations of a handover as a movement in the second step.
        Red edges indicate bonds.
        Blue amoebots are expanding.
        Green amoebots are contracting.
    }
\end{figure}

%%% RECONFIGURATION

We call a sequence of valid movements a \emph{reconfiguration}.
For the sake of simplicity, we will denote each reconfiguration by the sequence of the amoebot structures between the movements, i.e., $(S_0, S_1, \dots, S_F)$ denotes a reconfiguration from $S_0$ to $S_F$.
Note that we hide the exact movements by doing so.

%%% REVERSABILITY

\begin{observation}
\label{obs:reversible}
    Each valid movement is \emph{reversible}, i.e., if there is a valid movement from $S_i$ to $S_{i+1}$, then there is also a valid movement from $S_{i+1}$ to $S_i$.
    In order to obtain the latter from the former, we proceed as follows.
    In the first step, we remove the bonds that we have added previously in the third round.
    In the second step, each amoebot performs the reverse movement.
    In the third step, we add the bonds that we have removed previously in the first round.
    We execute the fourth step before the first step where the amoebots perform the reverse handovers.

    % For now, suppose that we do not perform any handovers from $S_i$ to $S_{i+1}$.
    % In the first step, we remove the bonds that we have added previously in the fourth round.
    % In the second step, each amoebot performs the reverse movement.
    % In the third step, we do nothing.
    % In the fourth step, we add the bonds that we have removed previously in the first round.

    % In case there are handovers from $S_i$ to $S_{i+1}$, we have to add another movement round at the beginning, in which the amoebots perform the reverse handovers in the third step and do nothing otherwise.
    % Note that additional bonds incident to amoebots performing a handover do not prevent their ability to perform it.

    The reversibility of valid movements immediately implies that each reconfiguration is \emph{reversible}, i.e., if there is a reconfiguration $(S_0, \dots, S_F)$, then there is also a reconfiguration $(S_F, \dots, S_0)$.
\end{observation}

%%% ACTIVATION

In this paper, we assume that we have a centralized scheduler.
The scheduler knows the current state of the amoebot structure at all times.
At the beginning of each synchronous round, it decides for each amoebot (i) which bonds to release, (ii) which movements to perform, (iii) which bonds to reestablished and (iv) which handovers to perform.
We leave the design of distributed solutions for future work.

\subsection{Problem Statement and Our Contribution}

In this paper, we formalize and extend the joint movement extension proposed by Feldmann et al.~\cite{DBLP:journals/jcb/FeldmannPSD22}.
In the following, we provide a proof of concept.
For that, we focus on two fundamental problems of MRSs: reconfiguration and locomotion.
We study these problems from a centralized view to explore the limits of the extension.

In the \emph{reconfiguration} problem, an MRS has to reconfigure its initial structure into a given target structure.
% We approach this problem by examining reconfiguration algorithms for other MRSs and show that we are able to simulate those.
% We first consider the nubot model.
% We show that we can partially simulate that model.
For that, we define meta-modules of rhombical and hexagonal shape.
We show that these meta-modules are able to perform various movement primitives of other MRSs, e.g., crystalline atoms, and rectangular/hexagonal metamorphic robots.
This allows us to simulate the reconfiguration algorithms of those models.

In the \emph{locomotion} problem, an MRS has to move along an even surface as fast as possible.
We might also ask the MRS to transport an object along the way.
We present three amoebot structures that are able to move by rolling, crawling, and walking, respectively.
We analyze their velocities and compare them to other structures of similar models.

%%%%%%%%%%%%%%%%%%%%
% CONTRIBUTION END %
%%%%%%%%%%%%%%%%%%%%

%%%%%%%%%%%%%%%%%%%%%%
% RELATED WORK BEGIN %
%%%%%%%%%%%%%%%%%%%%%%

\subsection{Related Work}
\label{sec:related_work}

MRSs can be classified into various types, e.g., lattice-type, chain-type, and mobile-type \cite{DBLP:journals/jirs/AhmadzadehMA16,brunete2017current,DBLP:reference/complexity/YimWPS09,yim2002modular}.
We refer to the cited papers for examples.
We will focus on lattice-type MRSs.
These in turn can be characterized by three properties: (i) the lattice, (ii) the connectivity constraint, and (iii) the allowed  movement primitives \cite{DBLP:journals/algorithmica/AkitayaADDDFKPP21}.

%%% Lattice

Various MRSs have been defined for different lattices, e.g., \cite{DBLP:conf/icra/Chirikjian94,DBLP:conf/innovations/WoodsCGDWY13} utilize the triangular grid, and \cite{DBLP:journals/tcs/AlmethenMP23,DBLP:journals/trob/DumitrescuSY04,DBLP:journals/arobots/RusV01} utilize the Cartesian grid.
% \textcolor{red}{Models for the Cartesian grid may apply different neighborhood definitions, e.g., rectangular metamorphic robots \cite{DBLP:journals/trob/DumitrescuSY04} use the von Neumann neighborhood for connectivity and the Moore neighborhood for movements while the line pushing model uses the Moore neighborhood for connectivity and the von Neumann neighborhood for movements.}
We will build meta-modules for the Cartesian and triangular grid.
Note that some MRSs were also physically realized.
Examples for MRSs using the triangular grid are hexagonal metamorphic robots \cite{DBLP:conf/icra/Chirikjian94}, HexBots \cite{DBLP:journals/jfi/SadjadiMAA12}, fractal machines \cite{DBLP:conf/icra/MurataKK94}, and catoms \cite{DBLP:conf/aaai/KirbyCAPHMG05}.
Examples for MRSs using the Cartesian grid are CHOBIE~II \cite{DBLP:conf/dars/SuzukiIKK08}, EM-Cubes \cite{DBLP:conf/icra/An08}, M-Blocks \cite{DBLP:conf/iros/RomanishinGR13}, pneumetic cellular robots \cite{DBLP:conf/icarcv/InouKK02}, and XBots \cite{DBLP:conf/iros/WhiteY07}.

%%% Connectivity constraint

We can identify four types of connectivity constraints: (i) the structure is connected at all times, (ii) the structure is connected except for moving robots, (iii) the structure is connected before and after movements, and (iv) there are no connectivity constraints.
%
% the structure is connected at all times
Our joint movement extension falls into the first category.
Other examples are crystalline atoms \cite{DBLP:conf/icra/RusV99}, telecubes \cite{DBLP:conf/icra/SuhHY02,DBLP:conf/icra/VassilvitskiiYS02}, and prismatic cubes \cite{DBLP:conf/iros/WellerKBGG09}.
%
% % the structure is connected except for moving robots
% The second category includes the sliding cube model \cite{DBLP:journals/ijrr/ButlerKRT04,DBLP:journals/ijrr/FitchB08}, rectangular \cite{DBLP:journals/trob/DumitrescuSY04} and hexagonal metamorphic robots \cite{DBLP:conf/icra/Chirikjian94}.
% In these models, detached robots move along the boundary of the otherwise connected structure.
% %
% % the structure is connected before and after movements
% Examples for the third category are the nubot model \cite{DBLP:conf/innovations/WoodsCGDWY13} and the line pushing model \cite{DBLP:journals/tcs/AlmethenMP23}.
% Both models allow a set of robots to move.
% These sets may divide the remaining structure into multiple connected components.
%
Examples for the second category are the sliding cube model \cite{DBLP:journals/ijrr/ButlerKRT04,DBLP:journals/ijrr/FitchB08}, rectangular \cite{DBLP:journals/trob/DumitrescuSY04} and hexagonal metamorphic robots \cite{DBLP:conf/icra/Chirikjian94}, and for the third category the nubot model \cite{DBLP:conf/innovations/WoodsCGDWY13} and the line pushing model \cite{DBLP:journals/tcs/AlmethenMP23}.
%
% there are no connectivity constraint
An example for the last category is the variant of the amoebot model considered by Dufoulon et al.\ \cite{DBLP:conf/podc/DufoulonKM21}.

%%% Movement primitives

The most common movement primitives for MRSs on the Cartesian grid are the rotation and slide primitives (see \Cref{fig:primitive:rhombical:rotation,fig:primitive:rhombical:slide}), and for MRSs on the triangular grid the rotation primitive (see \Cref{fig:primitive:hexagonal:rotation}).
Some models may constrain these movement primitives (see \Cref{fig:primitive:comparison}).
We refer to \cite{DBLP:journals/algorithmica/AkitayaADDDFKPP21,DBLP:journals/arobots/HurtadoMRA15} for a deeper discussion about possible constraints.
One way to bypass such constraints is to construct meta-modules, e.g., see \cite{DBLP:conf/iros/DeweyADGMSPC08,DBLP:journals/arobots/HurtadoMRA15,nguyen2001controlled}.
Our meta-modules implement all aforementioned movement primitives without any constraints, i.e., they perform them in place.

\begin{figure}[tb]
    \centering
    \begin{subfigure}[b]{.3\textwidth}
        \centering
        \includegraphics[page=1]{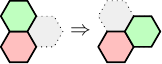}
        \caption{Rotation primitive.}
        \label{fig:primitive:hexagonal:rotation}
    \end{subfigure}
    \hfill
    \begin{subfigure}[b]{.3\textwidth}
        \centering
        \includegraphics[page=1]{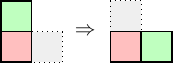}
        \caption{Rotation primitive.}
        \label{fig:primitive:rhombical:rotation}
    \end{subfigure}
    \hfill
    \begin{subfigure}[b]{.3\textwidth}
        \centering
        \includegraphics[page=1]{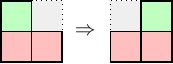}
        \caption{Slide primitive.}
        \label{fig:primitive:rhombical:slide}
    \end{subfigure}

    \smallskip

    \begin{subfigure}[b]{.3\textwidth}
        \centering
        \includegraphics[page=1]{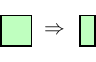}
        \caption{Contraction primitive.}
        \label{fig:primitive:rhombical:contraction}
    \end{subfigure}
    \hfill
    \begin{subfigure}[b]{.45\textwidth}
        \centering
        \includegraphics[page=1]{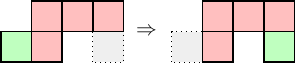}
        \caption{Tunnel primitive.}
        \label{fig:primitive:rhombical:tunnel}
    \end{subfigure}
    \hfill
    \begin{subfigure}[b]{.15\textwidth}
        \caption*{}
    \end{subfigure}

    \smallskip

    \begin{subfigure}[b]{0.45\textwidth}
        \centering
        \includegraphics[page=1]{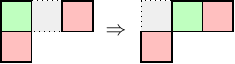}
        \caption{Leapfrog jump.}
        \label{fig:primitive:rhombical:leapfrog}
    \end{subfigure}
    \hfill
    \begin{subfigure}[b]{0.45\textwidth}
        \centering
        \includegraphics[page=1]{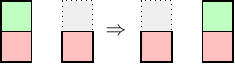}
        \caption{Straight monkey jump.}
        \label{fig:primitive:rhombical:monkey_straight}
    \end{subfigure}

    \smallskip

    \begin{subfigure}[b]{0.45\textwidth}
        \centering
        \includegraphics[page=1]{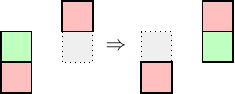}
        \caption{Diagonal monkey jumps.}
        \label{fig:primitive:rhombical:monkey_diagonal}
    \end{subfigure}
    \hfill
    \begin{subfigure}[b]{0.45\textwidth}
        \centering
        \includegraphics[page=1]{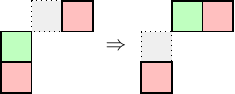}
        \caption*{}
    \end{subfigure}
    \caption{
        Examples of movement primitives.
        The green modules are moving, respectively.
        Models that utilize the Cartesian grid usually assume square modules.
        In \Cref{sec:meta:rhombical}, we utilize rhombical modules instead.
    }
    \label{fig:primitive}
\end{figure}

\begin{figure}[tb]
    \centering
    \begin{subfigure}[b]{.45\textwidth}
        \centering
        \includegraphics[page=1]{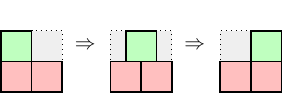}
        \caption{Sliding model, e.g., \cite{DBLP:journals/arobots/HurtadoMRA15}.}
    \end{subfigure}
    \hfill
    \begin{subfigure}[b]{.45\textwidth}
        \centering
        \includegraphics[page=1]{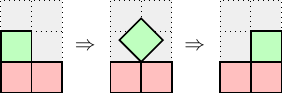}
        \caption{Pivoting model, e.g., \cite{DBLP:journals/algorithmica/AkitayaADDDFKPP21}.}
    \end{subfigure}
    \caption{
        Constrained movement primitives.
        The green modules are moving.
        The gray cells must be empty.
        Both primitives have the same result while the pivoting model requires more free space than the sliding model.
    }
    \label{fig:primitive:comparison}
\end{figure}

% In some models, a movement primitive may cause additional robots to move, e.g., the nubot model \cite{DBLP:conf/innovations/WoodsCGDWY13}, the line pushing model \cite{DBLP:journals/tcs/AlmethenMP23}, crystalline atoms \cite{DBLP:conf/icra/RusV99}, and our joint movement extension.

%%% Crystalline atoms

From all aforementioned models, crystalline atoms \cite{DBLP:conf/icra/RusV99}, telecubes \cite{DBLP:conf/icra/SuhHY02,DBLP:conf/icra/VassilvitskiiYS02}, and prismatic cubes \cite{DBLP:conf/iros/WellerKBGG09} are the closest to our joint movement extension.
In these MRSs, each robot has the shape of a unit square and is able to expand an arm from each side by half a unit.
Adjacent robots are able to attach and detach their arms.
Similar to our joint movement extension, a robot can move attached robots by expanding or contracting its arms.
In contrast to amoebots, a line of robots cannot reconfigure to any other shape since each pair of robots can only have a single point of contact \cite{DBLP:journals/comgeo/AloupisCDDFLORAW09}.
In order to allow arbitrary reconfigurations, the robots are combined into meta-modules of square shape
that are capable of various movement primitives, e.g., the rotation, slide, contraction, and tunnel primitives (see \Cref{fig:primitive:rhombical:rotation,fig:primitive:rhombical:contraction,fig:primitive:rhombical:slide,fig:primitive:rhombical:tunnel}).
% The rotation primitive is a special case of the tunnel primitive.
% Aloupis et al.~\cite{DBLP:conf/isaac/AloupisCDLAW08} have defined further movement primitives that utilize the basic movement primitives as subroutines.
%
Our rhombical meta-modules can implement all these movement primitives.
The rhombical shape has two advantages compared to the square shape.
First, we can implement further movement primitives that provide us with a simple way to construct a walking structure.
Second, we can utilize rhombical meta-modules as a basis for hexagonal meta-modules.

\section{Meta-Modules}
\label{sec:meta}

In this section, we will combine multiple amoebots to meta-modules.
In other models for programmable matter and modular robots, meta-modules have proven to be very useful.
For example, they allow us to bypass restrictions on the reconfigurability \cite{DBLP:conf/iros/DeweyADGMSPC08,DBLP:conf/icra/VassilvitskiiKRSY02} and to simulate (reconfiguration) algorithms for other models \cite{DBLP:journals/comgeo/AloupisBDDFIW13,DBLP:conf/icra/ParadaSS16}.

In the first two subsections, we will present meta-modules of rhombical and hexagonal shape, respectively.
In the third subsection, we show how to construct such meta-modules from an arbitrary amoebot structure.
In the fourth subsection, we discuss how to increase the stability of the structure during the movements.

%%%%%%%%%%%%%%%%%%%%%%%%%%%%%%%%
% RHOMBICAL META-MODULES BEGIN %
%%%%%%%%%%%%%%%%%%%%%%%%%%%%%%%%

\subsection{Rhombical Meta-Modules}
\label{sec:meta:rhombical}

%%% RHOMBI

Let $\ell$ be a positive even integer.
Our \emph{rhombical meta-module} consists of $\ell^2/2$ uniformly oriented expanded amoebots that we arrange into a rhombus of side length $\ell - 1$ (see the left side of \Cref{fig:meta:rhombical:contraction}).
% Consider a rhombus of side length $\ell - 1$.
% We make use of a representation with $\ell^2/2$ uniformly oriented expanded amoebots (see \Cref{fig:meta:rhombical:rhombus}).
We obtain a parallelogram of side lengths $\ell - 1$ and $\ell/2 - 1$ if we \emph{contract} all amoebots (see \Cref{fig:meta:rhombical:contraction}).
Note that we have to remove some bonds to perform the contraction.
We can \emph{expand} the parallelogram again by reversing the contractions.

\begin{figure}[tb]
    \centering
    \includegraphics[page=1]{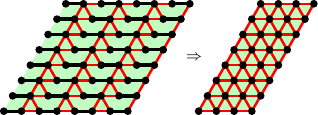}
    \caption{Contraction primitive.}
    \label{fig:meta:rhombical:contraction}
\end{figure}

\begin{lemma}
\label{lem:meta:rhombical:contraction}
    Our implementation of the contraction and expansion primitive requires a single round, respectively.
\end{lemma}

%%% REORIENTATION AND REALIGNMENT

There are exactly two possibilities to arrange the uniformly oriented expanded amoebots in a rhombus.
% By reorienting the amoebots in pairs with the help of handovers, we can \emph{reorientate} all amoebots within a rhombus (see \Cref{fig:meta:rhombical:reorientation}).
We can \emph{reorientate} all amoebots within a rhombus as follows (see \Cref{fig:meta:rhombical:reorientation}).
First, we partition the meta-module into $\frac{\ell}{2} \times \frac{\ell}{2}$ submodules.
Note that each submodule consists of exactly $2$ amoebots.
Then, we color the submodules in a checkerboard pattern.
Finally, we reorientate both colors independently of each other within two phases.
Each phase consists of two rounds.
In the first round, we contract each submodule of one color into its shorter diagonal.
The submodules of the other color ensure that the meta-module remains connected.
In the second round, we expand all contracted amoebots into the other corner of its submodule.

\begin{figure}[tb]
    \centering
    \includegraphics[page=1]{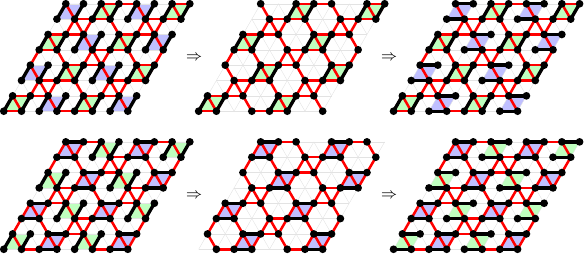}
    \caption{Reorientation primitive.}
    \label{fig:meta:rhombical:reorientation}
\end{figure}

\begin{lemma}
\label{lem:meta:rhombical:reorientation}
    Our implementation of the reorientation primitive requires $4$ rounds.
\end{lemma}

Furthermore, there are three possibilities to align the sides of a rhombus to the axes of the triangular grid.
By sliding each second row along its axis to the other side, we can \emph{realign} the other axis of the rhombus to the third axis of the triangular grid (see \Cref{fig:meta:rhombical:realignment}).
Note that in combination with the reorientation primitive, we are able to align a rhombus with any two axes of the triangular grid.
% By sliding each second row along its axis to the other side, we can \emph{realign} the other axis a rhombus is aligned to (see \Cref{fig:meta:rhombical:realignment}).
% Note that in combination with the reorientation of the amoebots within a rhombus, we are able to align a rhombus with any two axes of the triangular grid.

\begin{figure}[tb]
    \centering
    \includegraphics[page=1]{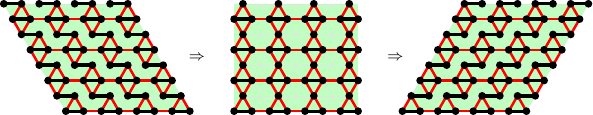}
    \caption{Realignment primitive.}
    \label{fig:meta:rhombical:realignment}
\end{figure}

\begin{lemma}
\label{lem:meta:rhombical:realignment}
    Our implementation of the realignment primitive requires $2$ rounds.
\end{lemma}

%%% RHOMBICAL SHAPES

We can arrange the meta-modules on a rhombical tesselation of the plane if they are all aligned to the same axes (see \Cref{fig:meta:rhombical:tesselation}).
Note that due to the triangular grid, the meta-modules are not connected diagonally everywhere.
Hence, we will only consider meta-modules connected if their sides are connected.

\begin{figure}[tb]
    \centering
    \includegraphics[page=1]{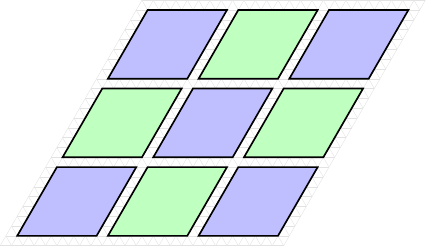}
    \caption{
        Rhombical tesselation.
        Note that there are spaces between the meta-modules since a node cannot be occupied by more than one amoebot.
        However, the spaces do not contain any nodes.
    }
    \label{fig:meta:rhombical:tesselation}
\end{figure}

In the following, we introduce three movement primitives: the slide, rotation, and $k$-tunnel primitive.
Our implementations of these primitives are similar to the ones for crystalline robots (e.g., \cite{DBLP:journals/comgeo/AloupisCDDFLORAW09}) and teletubes (e.g., \cite{DBLP:conf/icra/VassilvitskiiYS02}).

%%% SLIDE

In the \emph{slide primitive},
we move a meta-module $R_1$ along two adjacent substrate meta-modules $R_2$ and $R_3$ (see \Cref{fig:primitive:rhombical:slide}).
% During the operation, $R_2$ and $R_3$ serve as a substrate and are not allowed to move at the same time.
%
We realize the primitive as follows (see \Cref{fig:meta:rhombical:slide:fast}).
We assume that all amoebots are orientated into the movement direction.
Otherwise, we apply the reorientation primitive.
With respect to \Cref{fig:meta:rhombical:slide:fast}, let $L_1$ denote the uppermost layer of $R_2$ and $R_3$, and $L_2$ the second uppermost layer of $R_2$ and $R_3$.
Our slide primitive consists of two rounds.
In the first round,
we contract all amoebots in $L_1$ after removing all bonds between $L_1$ and $R_1$ except the last one in the movement direction, and all bonds between $L_1$ and $L_2$ except the last one in the opposite direction.
This moves $R_1$ into its target position.
In the second round,
we restore $R_2$ and $R_3$.
For that, we expand $L_1$ again after removing all bonds between $L_1$ and $R_1$, and between $L_1$ and $L_2$ except the last ones in the movement direction, respectively.
This ensures that $R_1$ stays in place.

\begin{figure}[tb]
    \centering
    \includegraphics[page=1]{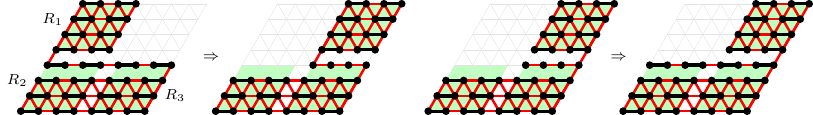}
    \caption{Slide primitive.}
    \label{fig:meta:rhombical:slide:fast}
\end{figure}

\begin{lemma}
\label{lem:meta:rhombical:slide:fast}
    Our implementation of the slide primitive requires $2$ rounds.
\end{lemma}

%%% ROTATION

In the \emph{rotation primitive}, we move a meta-module $R_1$ around another meta-module $R_2$ (see \Cref{fig:primitive:rhombical:rotation}).
We realize the primitive as follows (see \Cref{fig:meta:rhombical:rotation}).
We assume that all amoebots are orientated towards each other.
% \ap{Is this sentence clear?}
Otherwise, we apply the reorientation primitive.
Our rotation primitive consists of two rounds.
In the first round, we \emph{pull} $R_1$ into $R_2$ by contracting both meta-modules into parallelograms.
The resulting contracted meta-modules form a rhombus consisting of $\ell^2$ contracted amoebots.
Since all amoebots are contracted, we can rotate $R_1$ and $R_2$ within that rhombus by exchanging amoebots.
This allows us in the second round to \emph{push} $R_1$ out of $R_2$ into $R_1$'s target position.

\begin{figure}[tb]
    \centering
    \includegraphics[page=1]{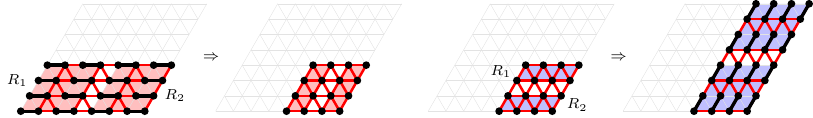}
    \caption{Rotation primitive for rhombical meta-modules. Red meta-modules perform a pull operation, and blue meta-modules a push operation.}
    \label{fig:meta:rhombical:rotation}
\end{figure}

\begin{lemma}
\label{lem:meta:rhombical:rotation}
    Our implementation of the rotation primitive requires $2$ rounds.
\end{lemma}

%%% K-TUNNEL

In the \emph{$k$-tunnel primitive}, we move a meta-module $R$ through a simple path of meta-modules with $k$ corners to the other end (see \Cref{fig:primitive:rhombical:tunnel}).
We realize the primitive as follows (see \Cref{fig:meta:rhombical:tunnel:fast}).
We assume that all meta-modules in the corners are orientated towards the preceding meta-module, and all other meta-modules are oriented along the path.
% \ap{Is this sentence clear? I could adjust \Cref{fig:meta:rhombical:tunnel:fast} to the other figures in this section.}
Otherwise, we apply the reorientation primitive.
We will not move $R$ directly through the path.
Instead, we will make use of pull and push operations (see \Cref{fig:meta:rhombical:rotation}).

Consider a line of at least 4 meta-modules with two contracted meta-modules at one end.
By expanding those two meta-modules and contracting the two meta-modules at the other end, we transfer a meta-module from one end to the other end without changing the length of the line (see the third round in \Cref{fig:meta:rhombical:tunnel:fast}).
If we have only a line of 3 meta-modules, it suffices to contract and expand one meta-module, respectively (see the second round in \Cref{fig:meta:rhombical:tunnel:fast}).
Note that we have to remove most of the bonds along the line to permit the line to move freely.
The pull and push operations allow us to `transfer' $R$ from one corner to the next corner in a single round.
Note that the rotation primitive is a special case of the $k$-tunnel primitive.
% Overall, the primitive requires $k+1$ rounds.

\begin{figure}[tb]
    \centering
    \includegraphics[page=1]{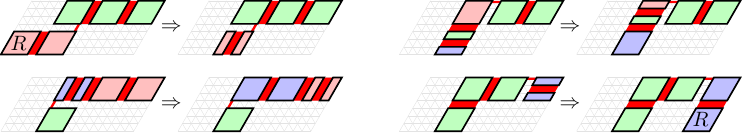}
    \caption{$k$-tunnel primitive. Red meta-modules perform a pull operation, and blue meta-modules a push operation.}
    \label{fig:meta:rhombical:tunnel:fast}
\end{figure}

\begin{lemma}
\label{lem:meta:rhombical:tunnel:fast}
    Our implementation of the $k$-tunnel primitive requires $k + 1$ rounds.
\end{lemma}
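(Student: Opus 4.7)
I would realize the primitive as a sequence of $k+1$ carefully scheduled joint-movement rounds, each of which propagates the ``wave'' of displacement induced by $R$ one step further along the tunnel. The first round performs a pull of the two meta-modules at the entrance, shifting $R$ into the first straight segment of the tunnel; the last round performs a push of the two contracted meta-modules at the exit, ejecting $R$ into its target position. Between these, one round is spent per interior corner of the tunnel: in such a round, the contracted meta-modules on the ``arrival'' side of the corner are expanded while the leading pair on the ``departure'' side is contracted, so that the displacement wave passes the corner without changing the length of either incident straight segment. Straight segments of length three are handled by the single-meta-module contract-expand variant noted in the preceding discussion, while longer segments use the two-meta-module contract-expand pair at the two ends. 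Summing $1 + (k-1) + 1 = k+1$ rounds then gives the claimed bound.

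Next, each round must be shown to be a single valid joint-movement round, which requires verifying three things: the bond graph $G_R$ remaining after the bond releases is connected, the simultaneous expansions and contractions admit a consistent mapping onto $\Geqt$ (no structural conflict), and no two non-adjacent bonds intersect during the motion (no collision). The first two conditions are local and can be verified exactly as in the slide primitive of the preceding lemma, using that each moving straight segment of the tunnel translates rigidly along a single axis of $\Geqt$ and retains a single bond per interface with any non-moving neighbour to keep $G_R$ connected. Collision-freeness is then automatic, since the motion of every segment is confined to the footprint of the tunnel, which is a simple path by assumption.

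The main obstacle is the corner rounds, because at a corner the two incident straight segments of the tunnel must translate along different axes of $\Geqt$ within the same round. The retained bonds at the corner must therefore be chosen to balance two opposing requirements: enough bonds must be kept to preserve connectivity of $G_R$ across the corner, but not so many that one segment mechanically obstructs the other or is dragged in the wrong direction. Since the rhombical tesselation admits only a finite set of corner geometries (determined by the axes of $\Geqt$) and two possible amoebot orientations per meta-module, a finite case analysis over these configurations produces, in each case, a valid bond-release pattern, and confirms that each corner round indeed advances $R$ by exactly one corner towards the exit. Concatenating the $k+1$ rounds thus yields the claimed bound.
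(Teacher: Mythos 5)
Your proposal matches the paper's construction: a pull at the entrance, a push at the exit, and one round per intermediate corner-to-corner transfer realized by simultaneously expanding the two contracted meta-modules at one end of a straight segment and contracting the two at the other end (with the single-module variant for segments of three), giving $1+(k-1)+1=k+1$ rounds. The paper states the lemma without a formal proof beyond this description, so your additional verification of connectivity, structural-conflict freedom, and collision freedom is consistent elaboration rather than a different route.
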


%%%%%%%%%%%%%%%%%%%%%%%%%%%%%%
% RHOMBICAL META-MODULES END %
%%%%%%%%%%%%%%%%%%%%%%%%%%%%%%

%%%%%%%%%%%%%%%%%%%%%%%%%%%%%
% LEAPFROG AND MONKEY START %
%%%%%%%%%%%%%%%%%%%%%%%%%%%%%

% \section{Leapfrog and Monkey Primitives}
% \label{app:akitaya}

Akitaya et al.\ \cite{DBLP:journals/algorithmica/AkitayaADDDFKPP21} have introduced two further movement primitives for rectangular MRSs: the \emph{leapfrog} and the \emph{monkey} primitive (see \Cref{fig:primitive:rhombical:leapfrog,fig:primitive:rhombical:monkey_straight,fig:primitive:rhombical:monkey_diagonal}).
In these primitives, the moving module performs a `jump' between two not adjacent modules.
% These primitives temporally disconnect the moving module from the remaining structure.
We are able to transfer these primitives to our rhombical meta-modules.
The idea is to divide the meta-modules into submodules and then build a bridge between the start and target position of the moving meta-module by applying slide and $k$-tunnel primitives (see \Cref{fig:meta:pivot}).

\begin{figure}[tb]
    \begin{subfigure}[b]{\textwidth}
        \centering
        \includegraphics[page=1]{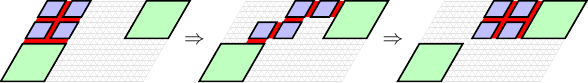}
        \caption{Leapfrog jump. \smallskip}
    \end{subfigure}
    \begin{subfigure}[b]{\textwidth}
        \centering
        \includegraphics[page=1]{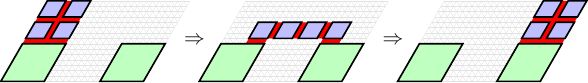}
        \caption{Straight monkey jump. \smallskip}
    \end{subfigure}
    \begin{subfigure}[b]{\textwidth}
        \centering
        \includegraphics[page=1]{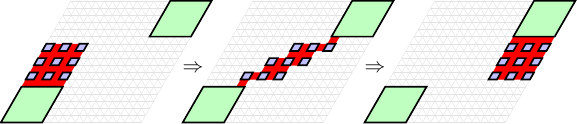}

        \medskip

        \includegraphics[page=1]{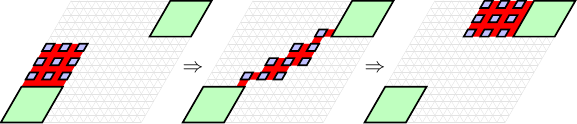}
        \caption{Diagonal monkey jumps.}
    \end{subfigure}
    \caption{
        Leapfrog and monkey jump primitives.
        Each step consists of multiple rounds.
        % These movement primitives were introduced by Akitaya et al.~\cite{DBLP:journals/algorithmica/AkitayaADDDFKPP21}.
        % We can simulate these by dividing the meta-modules into submodules.
        % With these, we build a bridge to the new position of the meta-module by applying slide and $k$-tunnel primitives.
    }
    \label{fig:meta:pivot}
\end{figure}

\begin{lemma}
    Our implementation of the leapfrog and monkey primitives requires $O(1)$ rounds, respectively.
\end{lemma}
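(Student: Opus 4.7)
The plan is to reduce both the leapfrog and the monkey primitives to a constant-length sequence of the primitives already established in this subsection, by exploiting the hint that a rhombical meta-module can itself be split into smaller sub-rhombi that behave like (smaller) meta-modules. Let $R$ be the moving meta-module and let $P$ denote the pivot meta-module(s) that $R$ has to jump over or around. In both primitives the start and target cells of $R$ share at most a single vertex (and no edge) of the rhombical tesselation, so the obstacle we must bridge has constant combinatorial complexity.

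The first step would be to decompose each involved pivot meta-module $P$ into two sub-rhombi of side length $\ell/2-1$ along the axis separating the start and target cell of $R$. Because $\ell$ is even, such a decomposition exists, and each sub-rhombus consists of uniformly oriented expanded amoebots that can be contracted and expanded independently. This gives us, in $O(1)$ rounds, a temporary configuration in which the cell adjacent to $R$ is occupied by a ``half-sized'' module and the diagonal cell becomes reachable for a single push/pull operation.

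The second step is to actually perform the jump. Here I would apply, in turn, the pull operation that moves one of the two sub-rhombi of $P$ into contact with $R$, then a $1$-tunnel or slide operation that transports $R$ into the bridged cell, and finally a push operation that recovers the shape of $P$. All three operations are instances of the primitives for which constant round bounds have already been proved (contraction/expansion in one round, slide in two rounds, $1$-tunnel in two rounds). After reassembling $P$ from its sub-rhombi and, if necessary, applying one reorientation and one realignment to restore the canonical orientation, $R$ sits in its target cell and every other meta-module is back to its original state. Since each sub-operation costs $O(1)$ rounds and we use only a constant number of them, the total cost of the primitive is $O(1)$.

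The main obstacle will be verifying that this jump is legal in the joint movement model: we must ensure that throughout the motion the residual bond graph $G_R$ remains connected, that no two non-adjacent bonds ever intersect (no collision in the sense of \Cref{fig:collision}), and that no structural conflict arises when a sub-rhombus and $R$ are simultaneously active. Because the rhombical tesselation is not connected along all diagonals, the bond removal pattern must be chosen carefully so that connectivity is routed through the side-adjacent neighbours during every intermediate step. This is a geometric case analysis that depends only on the constantly many cells touched by the primitive and not on $\ell$, so it can be carried out once (illustrated by \Cref{fig:meta:pivot}) and then reused for both the leapfrog and monkey variants by symmetry.
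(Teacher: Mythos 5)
Your proposal matches the paper's approach: the paper likewise divides the meta-modules into submodules and builds a bridge between the start and target position of the moving meta-module using the already-established slide and $k$-tunnel primitives, each costing $O(1)$ rounds, so a constant number of them gives the bound. The paper leaves the geometric case analysis to its figures rather than spelling it out, so your additional remarks on connectivity and collision-freeness are a reasonable elaboration rather than a divergence.
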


%%%%%%%%%%%%%%%%%%%%%%%%%%%
% LEAPFROG AND MONKEY END %
%%%%%%%%%%%%%%%%%%%%%%%%%%%

%%%%%%%%%%%%%%%%%%%%%%%%%%%%%%%%
% HEXAGONAL META-MODULES BEGIN %
%%%%%%%%%%%%%%%%%%%%%%%%%%%%%%%%

\subsection{Hexagonal Meta-Modules}
\label{sec:meta:hexagonal}

%%% HEXAGONS

Let $\ell$ be an even integer as before.
Our \emph{hexagonal meta-module} consists of three rhombical meta-modules of side length $\ell - 1$ (see \Cref{fig:meta:hexagonal:rearrangement}).
We arrange them into a hexagon of alternating side lengths $\ell$ and $\ell - 1$.

%%% REARRANGEMENT

There are two possibilities to arrange the rhombical meta-modules in a hexagonal meta-module (see \Cref{fig:meta:hexagonal:rearrangement}).
We can \emph{switch} between them as follows.
We assume that all amoebots are orientated as shown in \Cref{fig:meta:hexagonal:rearrangement}.
Otherwise, we apply the reorientation primitive.
Our implementation requires two rounds.
However, for the sake of comprehensibility, we will split both rounds into two rounds, respectively.

The idea is to apply a similar technique as in the realignment primitive for rhombical meta-modules.
In the first round, each rhombical meta-module contracts each second row without shifting the other rows.
Each rhombical meta-module within the hexagonal meta-module can be split into an equilateral triangle of side length $\ell - 1$, and an equilateral triangle of side length $\ell - 2$.
In the second and third round, the rhombical meta-modules exchange the smaller triangles.
For that, we first contract all expanded amoebots in each smaller triangle such that the amoebots form lines between the adjacent bigger triangles.
Note that the amoebots in the corners of the hexagonal meta-module ensure connectivity to adjacent hexagonal meta-modules.
Then, we expand the other amoebots in each smaller triangle into the nodes previously occupied by the expanded amoebots.
In the fourth round, each resulting rhombical meta-module expands each second row into the opposite direction (of the contracted amoebots of its bigger triangle) -- again without shifting the other rows.
Finally, note that we can combine the first and second round resp.\ the third and fourth round.
For that, we have to remove all necessary bonds at once, respectively.

\begin{figure}[ptb]
    \centering
    \includegraphics[page=1]{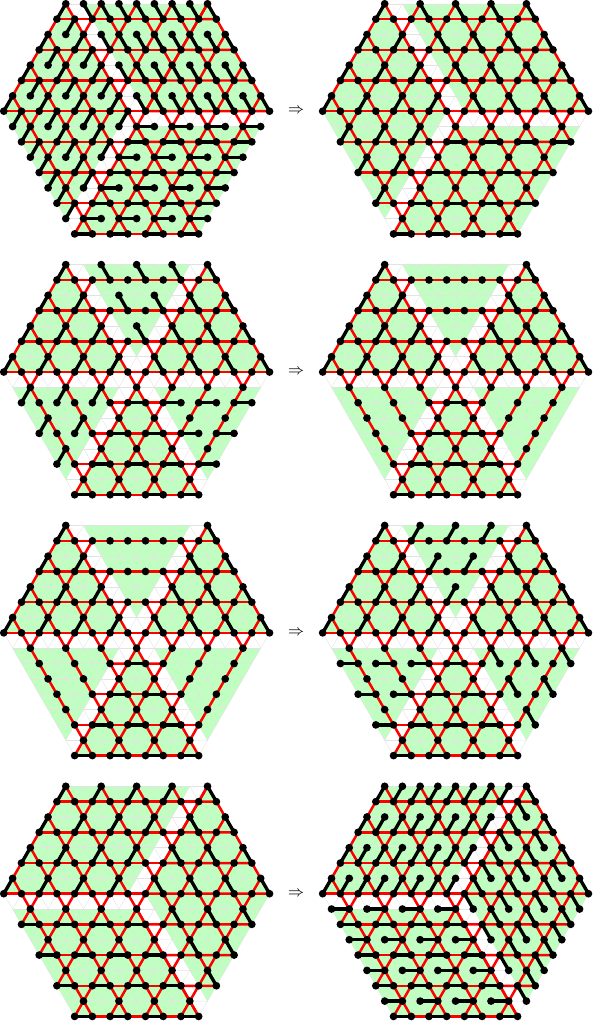}
    \caption{Switching primitive.}
    \label{fig:meta:hexagonal:rearrangement}
\end{figure}

\begin{lemma}
\label{lem:meta:hexagonal:rearrangement}
    Our implementation of the switching primitive requires $2$ rounds.
\end{lemma}

%%% HEXAGONAL SHAPES

We can arrange the meta-modules on a hexagonal tesselation of the plane (see \Cref{fig:meta:hexagonal:tesselation}).
In the following, we introduce the rotation primitive for hexagonal meta-modules.

\begin{figure}[tb]
    \centering
    \includegraphics[page=1]{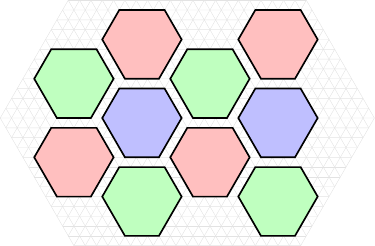}
    \caption{
        Hexagonal tesselation.
        Note that there are spaces between the meta-modules since a node cannot be occupied by more than one amoebot.
        However, the spaces do not contain any nodes.
    }
    \label{fig:meta:hexagonal:tesselation}
\end{figure}

%%% ROTATE

In the \emph{rotation primitive},
we move a hexagonal meta-module $H_1$ around another hexagonal meta-module $H_2$ as follows (see \Cref{fig:primitive:hexagonal:rotation,fig:meta:hexagonal:rotate:simplified}).
We arrange $H_2$ such that a rhombical meta-module $R_2$ is adjacent to both the old and new position of $H_1$,
and $H_1$ such that the rhombical meta-module $R_1$ adjacent to $R_2$ is aligned to the same axes as $R_2$.
We contract $R_1$ and $R_2$, and then expand them into the direction of the new position of $H_1$ (compare to \Cref{lem:meta:rhombical:rotation}).
This movement primitive requires two rounds.
Note that additional steps may be necessary to switch or reorientate the rhombical meta-modules beforehand.

\begin{figure}[tb]
    \centering
    \includegraphics[page=1]{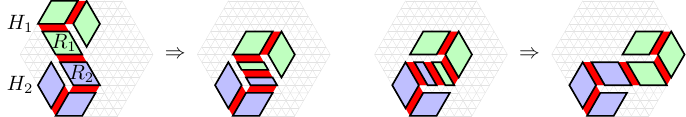}
    \caption{
    Rotation primitive for hexagonal meta-modules.
    % The green rhombi indicate meta-module $H_1$, and the blue rhombi meta-module $H_2$.
    % The rhombical meta-modules $R_1$ and $R_2$ are the ones that are contracted in the process.
    }
    \label{fig:meta:hexagonal:rotate:simplified}
\end{figure}

\begin{lemma}
\label{lem:meta:hexagonal:rotate:simplified}
    Our implementation of the rotation primitive requires $2$ rounds.
\end{lemma}

%%%%%%%%%%%%%%%%%%%%%%%%%%%%%%
% HEXAGONAL META-MODULES END %
%%%%%%%%%%%%%%%%%%%%%%%%%%%%%%

%%%%%%%%%%%%%%%%%%%%%%%%
% LINE FORMATION BEGIN %
%%%%%%%%%%%%%%%%%%%%%%%%

\subsection{Meta-Module Construction}
\label{sec:meta:construction}

%%% INTRODUCTION

% In \Cref{sec:meta}, we have considered rhombical and hexagonal meta-modules.
So far, we have considered amoebot structures that are already composed of rhombical and hexagonal meta-modules.
However, the initial amoebot structure may be arbitrary.
In this section, we consider the transformation of a line of amoebots to a line of rhombical and hexagonal meta-modules, respectively.
The formation of a line of amoebots from an arbitrary amoebot structure has been considered in previous work (e.g., \cite{DBLP:journals/dc/LunaFSVY20}).
% In the following, we show how to construct a line of rhombi resp. hexagons.

%%% RHOMBICAL META-MODULES

First, we form a line of $r$ rhombical meta-modules of side length $\ell-1$ from a line of $n = r \cdot \ell^2/2$ amoebots as follows (see \Cref{fig:meta:rhombical:construction}).
First, we divide the line of amoebots into groups of $\ell$ amoebots.
The idea is to apply the realignment primitive (see \Cref{lem:meta:rhombical:realignment}) to rotate each second group once counterclockwise by $60^\circ$ and each other group twice clockwise by $60^\circ$ without breaking the connectivity along the line.
However, in order to apply the realignment primitive, we first have to expand the groups into parallelograms of side lengths $\ell-1$ and $1$.

Therefore, the procedure consists of the following seven phases.
In the first six phases, we alternate between two types of phases.
In the first type, we possibly remove the parallelograms of the previous phase (third, and fifth phase) and form the parallelograms for the next phase (first, third, and fifth phase).
In the second type, we apply the realignment primitive to rotate the parallelograms by $60^\circ$ (second, fourth, and sixth phase).
Finally, in the seventh phase, we obtain the rhombical meta-modules by expanding all contracted amoebots.
We obtain the following theorem.
% \ap{Should I mark the parallelograms in \Cref{fig:meta:rhombical:construction}?}

% For that, we divide the line of amoebots into groups of $\ell$ amoebots and apply the following procedure introduced by Feldmann et al.~\cite{DBLP:journals/jcb/FeldmannPSD22} on each group of amoebots.
% The amoebots expand such that the height is doubled.
% Next, we reorientate the amoebots using handovers.
% \ap{TODO}
% Then, the amoebots contract again such that the width is halved.
% Thus, each iteration of the procedure doubles the height and halves the width.
% %
% We repeat this procedure until the width is 1.
% Clearly, we need $O(\log \ell)$ iterations.
% Each iteration requires $O(1)$ rounds.
% Overall, the procedure requires $O(\log \ell)$ rounds.
% %
% At the end of the second step, we obtain a parallelogram of side length $\ell - 1$ and $r \cdot \ell/2 - 1$ filled with contracted amoebots.
% It remains to expand all amoebots to obtain the line of $r$ rhombical meta-modules.

\begin{figure}[tb]
    \centering
    \includegraphics[page=1]{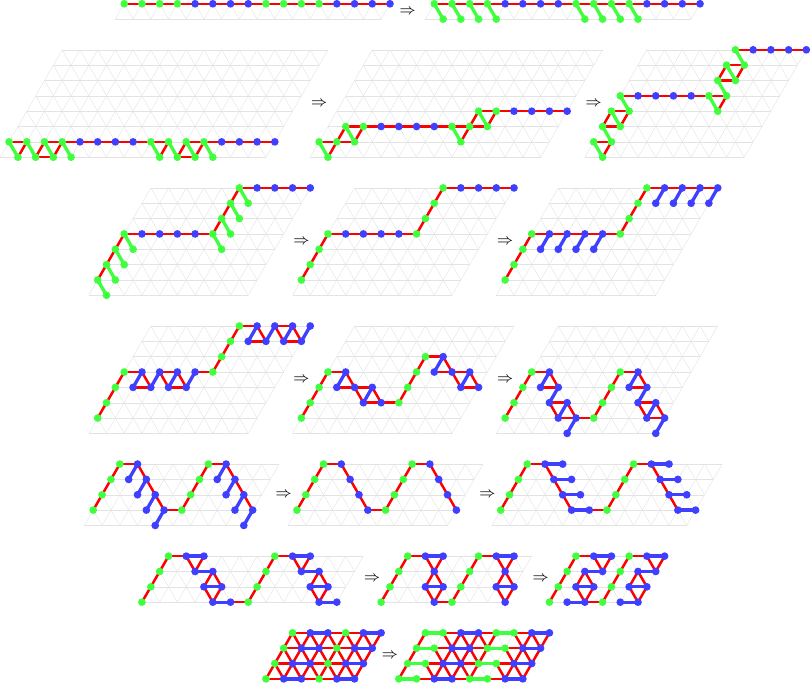}
    \caption{
        Construction of rhombical meta-modules.
        Adjacent amoebots of the same color belong to the same group.
        Each row shows a phase.
    }
    \label{fig:meta:rhombical:construction}
\end{figure}

\begin{theorem}
\label{th:meta:rhombical:construction}
    There is a centralized algorithm that forms a line of $r$ rhombical meta-modules of side length $\ell-1$ from a line of $n = r \cdot \ell^2/2$ amoebots within $12$ rounds.
\end{theorem}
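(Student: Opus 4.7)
The plan is to follow the two-step outline already sketched in the excerpt and verify that the pieces combine to $O(\log n)$ rounds. First I would partition the initial line into $r\ell/2$ consecutive blocks of $\ell$ amoebots each (so each block will eventually become one ``column'' of the target parallelogram), and describe the Feldmann et al.\ doubling-and-halving iteration on one block in isolation. A single iteration consists of three substeps of $O(1)$ rounds each: every contracted amoebot expands on the axis orthogonal to the current long axis of the block, so that the height doubles; pairwise handovers reorient all expansions so they align with the long axis; and finally pairwise contractions collapse the block, halving its width. Starting from a $1 \times \ell$ row of contracted amoebots, induction on $i$ shows that after $i$ iterations the block is a $2^i \times (\ell/2^i)$ rectangle of contracted amoebots, so after $\log \ell$ iterations each block has become a column of $\ell$ contracted amoebots.

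Next I would argue that all $r\ell/2$ blocks can execute the iteration in lockstep under a single global round schedule. Because adjacent blocks only share bonds along a single pair of positions, the expansions of neighbouring blocks occur in disjoint strips and cannot collide; the only care needed is to release the bonds between adjacent blocks during the expand/contract substeps so that no structural conflict is induced, while retaining at least one bond per inter-block boundary so that $G_R$ remains connected throughout the round. With this bookkeeping, the parallel execution terminates after $O(\log \ell)$ rounds and yields one global parallelogram of contracted amoebots of side lengths $\ell-1$ and $r\ell/2-1$, built by placing the $r\ell/2$ resulting columns side by side in their original order along the line.

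To finish, I would issue one more round in which every amoebot of the parallelogram expands along the axis that converts each consecutive $\ell \times (\ell/2)$ sub-parallelogram into a rhombus of side length $\ell-1$ of uniformly oriented expanded amoebots; by the definition of a rhombical meta-module in \Cref{sec:meta:rhombical}, this produces a line of $r$ rhombical meta-modules. Since $\ell^2/2 \le n$ implies $\log \ell = O(\log n)$, the total round count is $O(\log \ell) + O(1) = O(\log n)$, as required.

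The main obstacle I expect is the second step: verifying that the parallel execution of the Feldmann iteration across all blocks avoids both structural conflicts and collisions while keeping $G_R$ connected at every round. This is essentially a case analysis on which bonds to release at the block boundaries during the expansion, reorientation, and contraction substeps, but it has to be laid out carefully to meet all the requirements of the joint movement extension formalized in \Cref{sec:model:movements}. Once this is established, the remaining pieces (the single-block analysis and the final uniform expansion) are straightforward applications of the primitives already introduced.
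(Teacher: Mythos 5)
Your proposal follows essentially the same route as the paper's own construction: partition the line into groups of $\ell$ amoebots, run the Feldmann et al.\ expand--reorient--contract iteration $O(\log \ell)$ times in parallel until each group is a column of contracted amoebots forming the $\ell-1$ by $r\ell/2-1$ parallelogram, and then expand everything in one final round. The additional care you flag about inter-block bonds, connectivity of $G_R$, and collision-freedom during the lockstep execution is a reasonable point of rigor that the paper itself glosses over, but it does not change the argument.
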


\begin{proof}
    Each removal and formation requires one round.
    Hence, the first phase requires a single round, and the third and fifth phase require two rounds, respectively.
    By \Cref{lem:meta:rhombical:realignment}, each phase of the second type requires two rounds.
    The seventh phase requires a single round since we only expand all contracted amoebots.
    Therefore, the algorithm requires $12$ rounds overall.
\end{proof}

\begin{remark}
    \Cref{th:meta:rhombical:construction} also improves a result by Feldmann et al.\ \cite{DBLP:journals/jcb/FeldmannPSD22}.
    With the help of handovers, they showed how to reconfigure a line of amoebots into a rhombus within $O(\log n)$ rounds.
    For $r = 1$, we obtain the same result within $12$ rounds without using handovers.
\end{remark}

%%% HEXAGONAL META-MODULES

Second, we form a line of $h = \frac{r}{3}$ hexagonal meta-modules of alternating side lengths $\ell$ and $\ell - 1$ from a line of $r$ rhombical meta-modules of side length $\ell - 1$ as follows (see \Cref{fig:meta:hexagonal:construction}).
We assume that all rhombical meta-modules are oriented along the line.
Otherwise, we apply the reorientation primitive.
We first divide the line of rhombical meta-modules into groups of $3$ rhombical meta-modules and apply the following three steps to each group.

In the first step, we apply the rotation primitive to rotate the first meta-module around the second one.
In the second step, we apply the reorientation primitive to the first and third meta-module such that in the third step, we can apply the realignment primitive to the them to form a hexagonal meta-module.
We obtain the following theorem.

% At the start, we divide the line of rhombical meta-modules into groups of $3$ rhombical meta-modules and apply the following three steps to each group.
% In the first step, we apply the $k$-tunnel primitive to move the first meta-module through the second meta-module.
% In the second step, we apply the realignment primitive on the first and third meta-module.
% In the third step, we apply the procedure from \Cref{sec:nubot} to align the first and third meta-module with the hexagonal tesselation.
% Alternatively, we could also apply a modified slide primitive in order to keep the expansion as high as possible.
% The modification consists in the fact that we only contract a single amoebot such that the meta-module is only moved by a distance of 1.
% The procedure requires $O(1)$ rounds since all steps require $O(1)$ rounds.
%
% We obtain the following theorem.

\begin{figure}[tb]
    \centering
    \includegraphics[page=1]{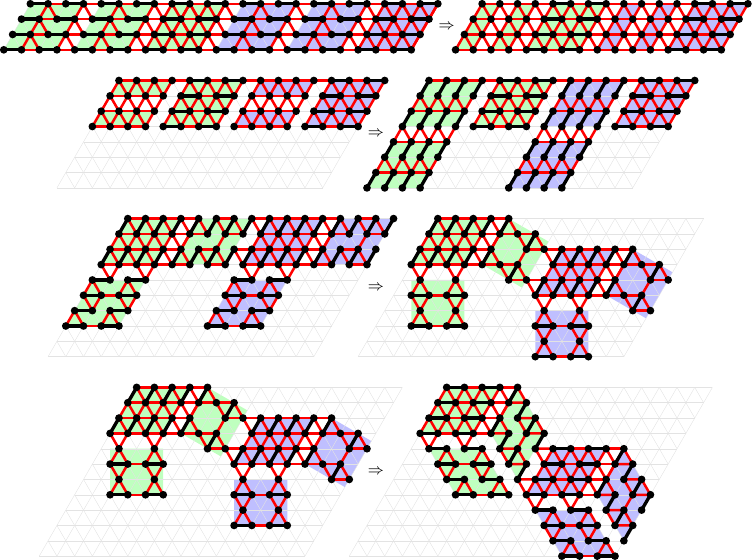}
    \caption{
        Construction of hexagonal meta-modules.
        Adjacent meta-modules of the same color belong to the same group.
        The first two rounds show the application of the rotation primitive.
        The last two rounds show the application of the realignment primitive.
        We have omitted the application of the reorientation primitive.
    }
    \label{fig:meta:hexagonal:construction}
\end{figure}

\begin{theorem}
\label{th:meta:hexagonal:construction}
    There is a centralized algorithm that forms a line of $h = \frac{r}{3}$ hexagonal meta-modules of alternating side lengths $\ell$ and $\ell - 1$ from a line of $r$ rhombical meta-modules of side length $\ell - 1$ within $8$ rounds.
\end{theorem}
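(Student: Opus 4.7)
The plan is to process the line of $r$ rhombical meta-modules in parallel groups of three, turning each group into a single hexagonal meta-module by the three-step procedure sketched just before the statement. The key structural observation is that each group of three occupies a disjoint region of the line, so all $h = r/3$ groups can be handled simultaneously, provided the bond-removal patterns at the boundary between adjacent groups are compatible; this is something we will have to verify explicitly.

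Fix one group consisting of meta-modules $R_1, R_2, R_3$ enumerated from left to right along the line. Step one applies the $k$-tunnel primitive with $k = O(1)$ to transport $R_1$ through $R_2$ into the position dictated by the hexagon arrangement; by the $k$-tunnel lemma this costs $O(k) = O(1)$ rounds. Step two applies the realignment primitive to $R_1$ and $R_3$ so that their sides are aligned with the two axes required inside the hexagonal tesselation; by the realignment lemma this costs two rounds. Step three invokes the alignment procedure from \Cref{sec:nubot} (or, equivalently, the single-amoebot variant of the slide primitive mentioned immediately before the statement) to shift $R_1$ and $R_3$ into their exact target positions within the hexagonal tesselation; this again costs $O(1)$ rounds. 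Stringing the three steps together converts one group into a hexagonal meta-module of alternating side lengths $\ell$ and $\ell - 1$ in $O(1)$ rounds.

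Since all $h = r/3$ groups are processed in parallel and $r \leq n$, the overall round count is $O(1)$, which is trivially within the stated $O(\log n)$ bound. Combining this with the earlier theorem that forms a line of rhombical meta-modules from a line of amoebots in $O(\log n)$ rounds would yield an end-to-end construction in $O(\log n)$ rounds, which explains the form in which the statement is phrased.

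The main obstacle is the parallel execution. During the $k$-tunnel step, each group must expand, contract, and slide its three meta-modules without colliding with, or accidentally tearing itself off from, its neighbors on the line, and the same caveat applies to the realignment and final alignment steps. The clean way to handle this is to release every inter-group bond at the start of each of the three steps and to rely on a single ``anchor'' bond per interface to maintain overall connectivity of $G_R$, checking case by case that no structural conflict is forced. If any potential collision remains, the groups can be processed in two alternating phases (odd-indexed groups first, then even-indexed ones), which merely doubles the round count and preserves the $O(1)$ and hence $O(\log n)$ bound.
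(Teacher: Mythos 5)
Your proposal follows essentially the same route as the paper: divide the line into groups of three, apply the $k$-tunnel primitive to move the first rhombical meta-module through the second, realign the first and third, and then use the translation procedure of the nubot simulation (or the modified slide primitive) to settle them onto the hexagonal tesselation, all in $O(1)$ rounds per group processed in parallel. Your added discussion of inter-group bond management and the odd/even fallback is a reasonable elaboration of a point the paper leaves implicit, but it does not change the argument.
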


\begin{proof}
    By \Cref{lem:meta:rhombical:rotation}, the first step requires $2$ rounds.
    By \Cref{lem:meta:rhombical:reorientation}, the second step requires $4$ rounds.
    By \Cref{lem:meta:rhombical:realignment}, the third step requires $2$ rounds.
    Therefore, the algorithm requires $8$ rounds overall.
\end{proof}

\begin{corollary}
\label{cor:meta:hexagonal:construction}
    There is a centralized algorithm that forms a line of $h$ hexagonal meta-modules of alternating side lengths $\ell$ and $\ell - 1$ from a line of $n = 3 \cdot h \cdot \ell^2/2$ amoebots within $20$ rounds.
\end{corollary}

\begin{proof}
    The corollary directly follows from \Cref{th:formation:rhombical,th:formation:hexagonal}.
    Note that all rhombical meta-modules are oriented along the line after applying \Cref{th:formation:rhombical} such that no additional reorientations are necessary.
\end{proof}

%%%%%%%%%%%%%%%%%%%%%%
% LINE FORMATION END %
%%%%%%%%%%%%%%%%%%%%%%

%%%%%%%%%%%%%%%%%%%
% STABILITY BEGIN %
%%%%%%%%%%%%%%%%%%%

\subsection{Stability}
\label{app:meta:stability}

One might argue that removing too many bonds may destabilize the structure during the movements.
It is therefore desirable to maintain the connectivity between the amoebots as much as possible.
In the following, we propose two measures for stability.

%%% LOCAL STABLE

First, we consider the connectivity of the amoebot structure during each movement.
Note that we only decrease the connectivity in the first step of each movement where we may remove bonds.
In the second and fourth step, the connectivity does not change, and in the third step, we can only increase the connectivity by adding new bonds.
Ideally, we want to just remove a constant fraction of all incident bonds of a connected component.

Formally, let $\Gamma_{G}(U) = \{ \{u,v\} \in E \mid u \in U \land v \not\in U \}$ where $G = (V, E)$.
Recall that $S_i$ denotes the amoebot structure before the first step, and $S_i^0$ the amoebot structure $S_i$ after the first step.

\begin{definition}
\label{def:stable:locally}
    We call a reconfiguration $(S_0, \dots, S_F)$ \emph{locally stable} if and only if
    \begin{equation*}
    % \label{eq:stability}
        \forall i \in \{ 0, \dots, F-1 \}: |\Gamma_{S_i^0}(U)|=\Theta(|\Gamma_{S_i}(U)|) \text{ for all connected components } U \subseteq S_i.
    \end{equation*}
\end{definition}

\begin{remark}
    There are locally stable reconfigurations $(S_0, \dots, S_F)$ such that the reverse reconfiguration $(S_F, \dots, S_0)$ is not locally stable.
    For example, consider the second step of the second last phase of our algorithm that forms a line of $r$ rhombical meta-modules of side length $\ell-1$ from a line of $n = r \cdot \ell^2/2$ amoebots (see \Cref{th:meta:rhombical:construction,fig:meta:rhombical:construction}).
    For $r = 1$, we have $\ell = O(\sqrt n)$.
    Recall that each group consists of $\ell$ amoebots.
    The reconfiguration only removes a constant fraction of the bonds between two groups and adds $O(\sqrt n)$ bonds afterwards.
    Hence, the reverse reconfiguration has to remove these $O(\sqrt n)$ bonds except for one.
\end{remark}

%%% GLOBAL STABLE

Second, we consider the connectivity of the amoebot structure during the whole reconfiguration.
Ideally, a stable reconfiguration algorithm should keep the connectivity as high as possible during the whole process.
The weakest point of a reconfiguration $(S_0, \dots, S_F)$ is the amoebot structure $S_i$ with the lowest connectivity.
However, the achievable connectivity highly depends on the initial amoebot structure $S_0$ and the target amoebot structure $S_F$.
Hence, the connectivity of $S_i$ should be at least as high as the one of $S_0$ and $S_F$ with the lower connectivity.
Recall that $\Gamma_{G}(U) = \{ \{u,v\} \in E \mid u \in U \land v \not\in U \}$ where $G = (V, E)$.
Let
\begin{equation*}
    \alpha(G) = \min_{U \subseteq V, 1 \leq |U| \leq |V|/2} \frac{|\Gamma(U)|}{|U|}.
\end{equation*}

\begin{definition}
\label{def:stable:globally}
    We call a reconfiguration $(S_0, \dots, S_F)$ \emph{globally stable} if and only if
    \begin{equation*}
    % \label{eq:stability}
        % \forall i \in \{ 0, \dots, F \}: \alpha(G_{S_i})=\Omega(\min\{\alpha(G_{S_0}),\alpha(G_{S_F})\}).
        \min_{ i \in \{ 0, \dots, F \} } \alpha(G_{S_i})=\Omega(\min\{\alpha(G_{S_0}),\alpha(G_{S_F})\}).
    \end{equation*}
\end{definition}

\begin{definition}
\label{def:stable}
    We call a reconfiguration $(S_0, \dots, S_F)$ \emph{stable} if and only if it is locally and globally stable.
\end{definition}

%%% SLIDE

Note that both constructions in \Cref{sec:meta:construction} are stable.
In order to obtain stable reconfigurations in general, locally stable movement primitives can be helpful.
All so far presented movement primitives are locally stable except for the slide, rotation and $k$-tunnel primitives for rhombical meta-modules, and the switching primitive for hexagonal meta-modules.
In the following, we present alternative implementations that increase the connectivity for these primitives.

First, consider the slide primitive.
Recall that we need to slide a meta-module $R_1$ along two meta-modules $R_2$ and $R_3$ (see \Cref{fig:primitive:rhombical:slide}).
We refer to \Cref{sec:meta:rhombical} for details.
The idea is to enlarge the moving part as follows (see \Cref{fig:meta:rhombical:slide:slow}).
For the sake of simplicity, we assume that $\ell$ is divisible by $8$.
We partition $R_2$ and $R_3$ into $4 \times 4$ submodules, respectively.
The slide primitive consists of the following four rounds.
In the first round, we utilize the submodules adjacent to the start and target position of $R_1$ to move $R_1$ to the center position.
In the second round, we restore $R_2$ and $R_3$ again.
In the last two rounds, we mirror the first two rounds to move $R_1$ to its target position.

\begin{figure}[tb]
    \centering
    \includegraphics[page=1]{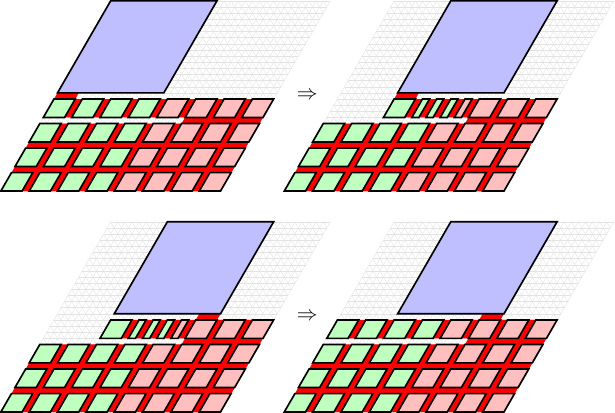}
    \caption{
        Slide primitive with a higher expansion.
        The rhombi indicate the meta-modules.
        The blue rhombi indicate $R_1$, and the green and red rhombi indicate $R_2$ and $R_3$, respectively.
        The red color between the meta-modules indicate bonds.
    }
    \label{fig:meta:rhombical:slide:slow}
\end{figure}

\begin{lemma}
\label{lem:meta:rhombical:slide:slow}
    Our alternative implementation of the slide primitive requires $4$ rounds.
\end{lemma}

%%% TUNNEL

Second, consider the rotation and $k$-tunnel primitives.
Since the rotation primitive is a special case of the $k$-tunnel primitive, we will constrain the explanation to the latter.
Recall that we need to move a meta-module $R$ through a simple path of meta-modules with $k$ corners to the other end (see \Cref{fig:primitive:rhombical:tunnel}).
We refer to \Cref{sec:meta:rhombical} for details.

We now discuss how to perform a single step of the $k$-tunnel, i.e., how to transfer a meta-module between two corners.
Consider the line of meta-modules between those corners.
One end contains two contracted meta-modules.
We distinguish between to cases: (i) there is at least one other meta-module between the corners (see \Cref{fig:meta:rhombical:tunnel:slow:2}), and (ii) there is no other meta-module between the corners (see \Cref{fig:meta:rhombical:tunnel:slow:1}).

\begin{figure}[ptb]
    \centering
    \includegraphics[page=1]{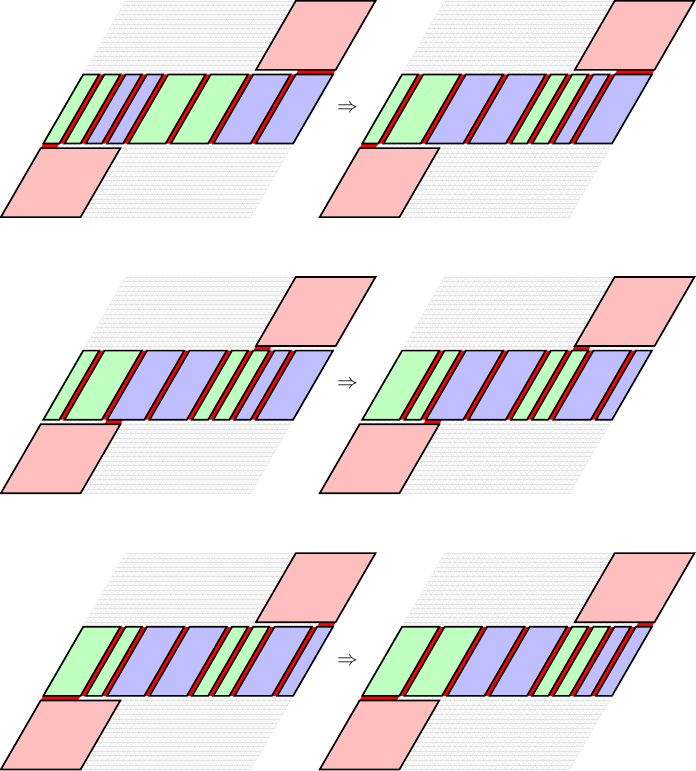}
    \caption{
        First case for the $k$-tunnel primitive.
        The rhombi and parallelograms indicate the meta-modules and submodules, respectively.
        Adjacent parallelograms of the same color belong to the same meta-module.
        The green and blue parallelograms indicate the considered line of meta-modules.
        The red color between the meta-modules indicate bonds.
    }
    \label{fig:meta:rhombical:tunnel:slow:2}
\end{figure}

\begin{figure}[ptb]
    \centering
    \includegraphics[page=1]{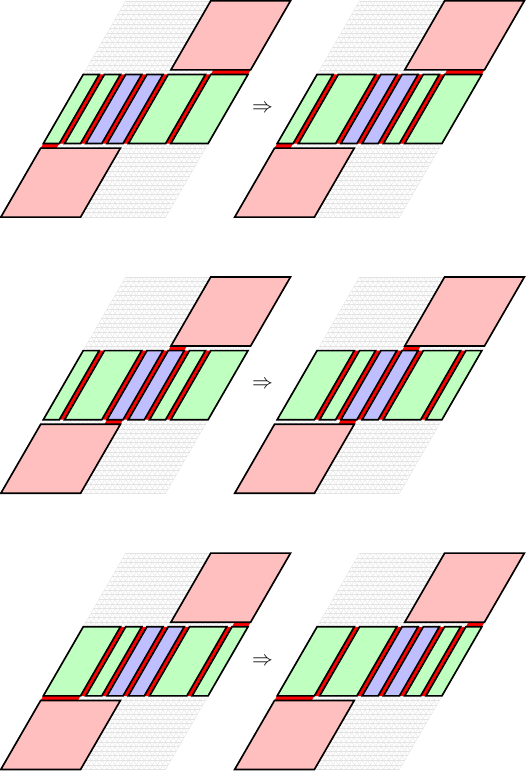}
    \caption{
        Second case for the $k$-tunnel primitive.
        The rhombi and parallelograms indicate the meta-modules and submodules, respectively.
        Adjacent parallelograms of the same color belong to the same meta-module.
        The green and blue parallelograms indicate the considered line of meta-modules.
        The red color between the meta-modules indicate bonds.
    }
    \label{fig:meta:rhombical:tunnel:slow:1}
\end{figure}

For the sake of simplicity, we assume that $\ell$ is divisible by $4$.
We divide each meta-module of the line into two submodules orthogonally to the movement direction.
Each submodule forms a parallelograms of side lengths $\ell - 1$ and $\ell/2 - 1$.
We assume that all expanded amoebots are orientated into the movement direction.
This allows us to contract the submodules into parallelograms of side lengths $\ell - 1$ and $\ell/4 - 1$.

The difficulty lies in maintaining enough bonds to the meta-modules attached to the corners.
We will ensure that there is always a submodule that maintains its bonds.
A single step of the $k$-tunnel primitive consists of the following three rounds.

In the first round, we contract or expand all submodules that we would have in the original version of the $k$-tunnel primitive -- except for the ones at the two ends of the line.
We use those two to maintain the bonds to attached meta-modules.
% Note that if we maintained the bonds at other submodules, we would move the line.

In the second round, we swap the states of the submodules of the meta-module at both ends.
We use the third submodule of both ends to maintain the bonds to attached meta-modules.
In the third round, we complete the transfer by contracting and expanding the remaining two submodules.
We use the submodules at the two ends of the line again to maintain the bonds to attached meta-modules.

\begin{lemma}
\label{lem:meta:rhombical:tunnel:slow}
    Our alternative implementation of the $k$-tunnel primitive requires $3 \cdot (k + 1)$ rounds.
\end{lemma}

\begin{corollary}
    Our alternative implementation of the rotation primitive requires $6$ rounds.
\end{corollary}

\begin{remark}
    We can utilize our alternative implementation of the rotation primitive to make the algorithms that form a line of hexagonal meta-modules locally stable (see \Cref{th:meta:hexagonal:construction,cor:meta:hexagonal:construction}).
    % fulfill \Cref{def:stable:locally} during the execution of \Cref{th:meta:hexagonal:construction} (and with that also \Cref{cor:meta:hexagonal:construction}).
    Note that the first step of the rotation primitive, i.e., the contraction of the meta-modules, is already locally stable (see first round in \Cref{fig:meta:hexagonal:construction}).
    % already satisfies \Cref{eq:stability} (see first round in \Cref{fig:meta:hexagonal:construction}).
    Hence, it suffices to only use the alternative implementation on the second step, i.e., the expansion  of the meta-modules (see second round in \Cref{fig:meta:hexagonal:construction}).
    Therefore, the required number of rounds of \Cref{th:meta:hexagonal:construction,cor:meta:hexagonal:construction} increases by $2$ to $10$ and $22$, respectively.
\end{remark}

%%% SWITCHING

Finally, consider the switching primitive.
Recall that we need to switch between the two possibilities to arrange the rhombical meta-modules in a hexagonal meta-module (see \Cref{fig:meta:hexagonal:rearrangement}).
We refer to \Cref{sec:meta:hexagonal} for details.

Consider our implementation in \Cref{fig:meta:hexagonal:rearrangement}.
Note that only the second round does not satisfy our connectivity requirement.
Hence, we alter the way we exchange the smaller triangles between the rhombical meta-modules as follows.
We replace the second and third round in \Cref{fig:meta:hexagonal:rearrangement} with the four rounds in \Cref{fig:meta:hexagonal:rearrangement:alt}.
The idea is to apply a similar technique as in the reorientation primitive for rhombical meta-modules.
Instead of exchanging all amoebots at once, we exchange them in two phases.
In the first phase, we exchange each second pair of amoebots on each line between the bigger triangles (see first and second round in \Cref{fig:meta:hexagonal:rearrangement:alt}).
In the second phase, we exchange all other pairs of amoebots (see third and fourth round in \Cref{fig:meta:hexagonal:rearrangement:alt}).
Note that we can again combine the first two (first round in \Cref{fig:meta:hexagonal:rearrangement} and first round in \Cref{fig:meta:hexagonal:rearrangement:alt}) and last two rounds (last round in \Cref{fig:meta:hexagonal:rearrangement} and last round in \Cref{fig:meta:hexagonal:rearrangement:alt}), respectively.

\begin{figure}[ptb]
    \centering
    \includegraphics[page=1]{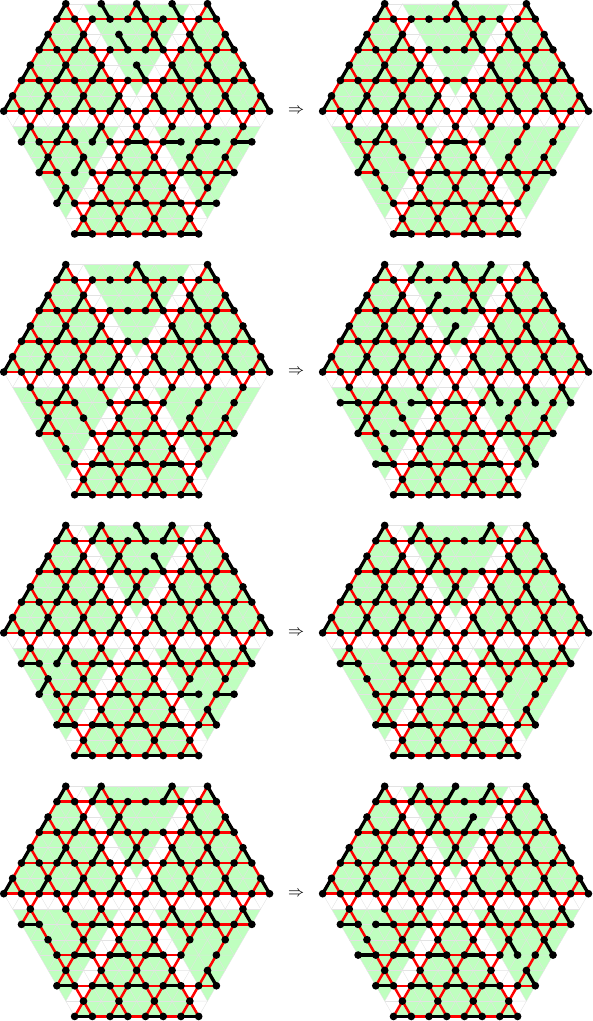}
    \caption{Switching primitive with higher expansion.}
    \label{fig:meta:hexagonal:rearrangement:alt}
\end{figure}

\begin{lemma}
\label{lem:meta:hexagonal:rearrangement:alt}
    Our alternative implementation of the switching primitive requires $4$ rounds.
\end{lemma}

%%%%%%%%%%%%%%%%%
% STABILITY END %
%%%%%%%%%%%%%%%%%

%%%%%%%%%%%%%%%%%%%%%%%%%
% Reconfiguration BEGIN %
%%%%%%%%%%%%%%%%%%%%%%%%%

\section{Reconfiguration}
\label{sec:reconfiguration}

In this section, we discuss possible reconfiguration algorithms.
For that, we look at reconfiguration algorithms for other lattice-type MRSs and discuss whether amoebots are capable of simulating these.
In particular, we consider the polylogarithmic time solutions for the nubot model \cite{DBLP:conf/innovations/WoodsCGDWY13} and the crystalline atom model \cite{DBLP:conf/isaac/AloupisCDLAW08}.
The first subsection deals with the former model while the second subsection deals with reconfiguration algorithms for our meta-modules including the one for the latter model.
None of the reconfiguration algorithms discussed in this section will be stable.

\subsection{Nubot Model}
\label{sec:nubot}
\label{app:nubot}

Similar to the amoebot model, the \emph{nubot model} \cite{DBLP:conf/innovations/WoodsCGDWY13} considers robots on the triangular grid where at most one robot can be positioned on each node.
Adjacent robots can be connected by rigid bonds\footnote{Woods et al.~\cite{DBLP:conf/innovations/WoodsCGDWY13} distinguish between rigid and flexible bonds. We ignore that since the flexible bonds are not necessary to achieve the polylogarithmic time reconfiguration algorithm.}.
In the joint movement extension, the rigid bonds correspond to the edge set $B$.

Robots are able to appear, disappear, and rotate around adjacent robots.
A rotating robot may push and pull other robots into its movement direction.
Hence, each rotation results in a translation of a set of connected robots (including the rotating robot) into the movement direction by the distance of 1.
The set depends on the bonds and the movement direction, and may include robots not connected by rigid bonds to the rotating robot.
In contrast to the joint movement extension, there are no collisions by definition of the set.
However, a movement may not be performed due to structural conflicts.
In this case, the nubot model does not perform the movement.

\begin{lemma}
    An amoebot structure of contracted amoebots is able to translate a set of robots in a constant number of joint movements if the translation is possible in the nubot model.
\end{lemma}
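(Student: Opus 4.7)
The plan is to simulate the nubot rotation---which translates a connected set $T$ containing the rotating robot $r$ by one unit in direction $d$---using two joint-movement rounds. Let $r$ sit at position $p$, so its target position is $p+d$, and let $q \in S \setminus T$ denote the pivot about which $r$ rotates, so $q$ is adjacent to both $p$ and $p+d$. The nubot translation decomposes naturally into a push acting on the part of $T$ in front of $r$ and a pull acting on the part of $T$ behind $r$, both displacing their respective sides by one unit in direction $d$. I will mimic these two halves by one expansion followed by one contraction of~$r$.

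In the first round, I release exactly those bonds between $T$ and $S\setminus T$ that the nubot rotation implicitly breaks (the bonds whose endpoints would no longer be unit-distance after the translation), keeping all bonds internal to $T$, all bonds internal to $S\setminus T$, and in particular the back bond from $r$ to $q$. Then $r$ expands in direction~$d$. The retained back bonds anchor $r$'s back node at $p$, while its front node moves to $p+d$ and pushes the front-side of $T$ forward by one unit. After this round, $r$ is expanded at $\{p,p+d\}$; the front-side of $T$ has translated by $d$; the back-side of $T$ and all of $S\setminus T$ remain unchanged; and the front node of $r$ at $p+d$ becomes newly bonded to $q$ since they are adjacent.

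In the second round, I release the bonds from $r$'s back node to $S\setminus T$ (in particular the bond to $q$), while keeping the bonds from the back node to the back-side of $T$ and all front-side bonds of~$r$. Then $r$ contracts. The front bonds, including the newly established bond to $q$, pin the front node at $p+d$, so the back node must travel to $p+d$, and the retained back bonds drag the back-side of $T$ forward by one unit. At the end, $r$ is contracted at $p+d$, every amoebot of $T$ has been translated by~$d$, and no amoebot of $S\setminus T$ has moved, all in two rounds.

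The main obstacle is verifying that no connectivity conflict, structural conflict, or collision occurs. Structural conflicts and geometric collisions are ruled out by the hypothesis that the nubot translation is possible, which enforces precisely the positional compatibility required by the joint-movement extension. Connectivity of $G_R$ in the first round is immediate from keeping all bonds inside $T$, inside $S\setminus T$, and the anchor bonds at $r$'s back. Connectivity in the second round is the delicate part and is handled by the nubot pivot~$q$: after Round~1, $q$ is bonded to both nodes of the expanded $r$, and the front-node bond remains in $G_R$ after we release the back bonds, keeping $r$ (and hence all of $T$) linked to $S\setminus T$ throughout the contraction.
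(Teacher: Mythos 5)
Your construction drives the entire translation with the expansion and contraction of the single rotating robot $r$ while ``keeping all bonds internal to $T$.'' This presupposes that $T\setminus\{r\}$ splits into a \emph{front part} rigidly reachable only from $r$'s front node and a \emph{back part} rigidly reachable only from $r$'s back node. For a general nubot movable set this decomposition does not exist. Bonds in this model are rigid (they may not change length or orientation during a movement), so in your first round the set of amoebots that moves is exactly the set connected to $r$'s front node by retained bonds avoiding $r$, and the set that stays is the set connected to its back node. If some amoebot of $T$ is connected to both --- for instance, if $T$ contains an amoebot in front of $r$ and one behind $r$ joined by a path in $T$ that goes around $r$, or simply a side neighbour of $r$ bonded to amoebots on both sides --- then round~1 is a structural conflict, not a valid joint movement. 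The hypothesis that the translation is possible in the nubot model does not rescue you: as the paper notes, the nubot model has no collisions or intermediate rigidity constraints by definition of the movable set, so ``possible in the nubot model'' only constrains the final configuration, whereas the joint movement extension demands a consistent rigid-body decomposition at every $t\in[0,1]$. You also never specify how $r$ assigns its side bonds to its two nodes, and in round~2 you only release the \emph{back node's} bonds to $S\setminus T$, leaving retained bonds between other still-to-move back-part amoebots and the stationary $S\setminus T$, which is again a structural conflict.

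The paper avoids all of this by inverting the frame: it keeps the movable set $M$ fixed and translates each connected component $C$ of the complement in the opposite direction $d'$. Within each component it expands and then contracts an entire row $A$ of amoebots (transversal to the motion), so that the stretching happens inside many amoebots spanning the component rather than inside a single one, and it anchors $C$ to $M$ by exactly one retained bond (choosing between two cases depending on whether some $u\in C$ stays adjacent to the same $v\in M$). Collision-freedom is established by a separate claim that every node in direction $d'$ of a complement amoebot is unoccupied or occupied by the same component, which is derived from the validity of the final nubot configuration. If you want to salvage your forward-moving, single-driver idea, you would at minimum need to (i) break the internal bonds of $T$ that straddle the front/back partition, which then forces you to re-argue connectivity of $G_R$ inside $T$, and (ii) handle $T$--$(S\setminus T)$ bonds at every amoebot of $T$, not just at $r$; at that point you are essentially reconstructing the paper's componentwise anchoring argument.
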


\begin{proof}
    Let $d$ denote the movement direction, $d'$ the opposite direction, and $d''$ any other cardinal direction.
    Let $M$ denote the set of amoebots that has to be moved.
    Instead of moving $M$ into direction $d$, we will move the remaining amoebot structure into direction $d'$.
    Note that $M$ divides the remaining amoebot structure into connected components.

    \begin{claim}
    \label{lem:nubot}
        Each node $x$ in direction $d'$ of an amoebot not in $M$ is either occupied by an amoebot of the same connected component or unoccupied.
    \end{claim}

    \begin{claimproof}
        Trivially, $x$ cannot be occupied by an amoebot of another connected component.
        Further, $x$ cannot be occupied by an amoebot in $M$ since otherwise, the resulting amoebot structure would not be free of collisions.
    \end{claimproof}

    For each connected component $C$, we perform the following steps in parallel.
    If there is an amoebot $u \in C$ that is adjacent to the same amoebot $v \in M$ before and after the translation,
    we proceed as follows (see \Cref{fig:nubot:1}).
    Let $A$ denote the row of amoebots in $C$ through $u$ into direction $d''$.
    Note that $A$ may only contain $u$.
    In the first round, we remove all bonds between $C$ and $M$ except for the bond between $u$ and $v$,
    and each amoebot in $A$ expands into direction $d'$.
    In the second round, we remove all bonds between $C$ and $M$ except for the new bond between $u$ and $v$,
    and each amoebot in $A$ contracts.
    Note that both movements are possible due to \Cref{lem:nubot}.

    If there is no amoebot in $C$ that is adjacent to the same amoebot in $M$ before and after the translation,
    we proceed as follows (see \Cref{fig:nubot:2}).
    Let $B$ denote all amoebots in $C$ that have an unoccupied node in direction $d'$.
    In the first round, each amoebot in $B$ expands into direction $d'$.
    There has to be an amoebot $u \in B$ that becomes adjacent to an amoebot $v \in M$.
    Otherwise, the resulting amoebot structure would not be connected.
    In the second round, we remove all bonds between $C$ and $M$ except for the bond between $u$ and $v$,
    and each amoebot in $B$ contracts.
    Note that both movements are possible due to \Cref{lem:nubot}.
\end{proof}

\begin{figure}[tb]
    \centering
    \begin{subfigure}[b]{\textwidth}
        \centering
        \includegraphics[page=1]{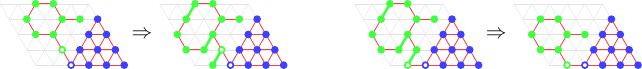}
        \caption{Amoebot $u \in C$ is adjacent to the same amoebot $v \in M$ before and after the translation. \smallskip}
        \label{fig:nubot:1}
    \end{subfigure}
    \begin{subfigure}[b]{\textwidth}
        \centering
        \includegraphics[page=1]{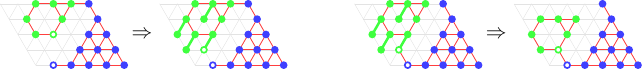}
        \caption{No amoebot in $C$ is adjacent to the same amoebot in $M$ before and after the translation.}
        \label{fig:nubot:2}
    \end{subfigure}
    \caption{
        Simulation of the nubot model.
        Let $d$ denote the northeastern direction, $d'$ the southwestern direction, and $d''$ the northwestern direction.
        The blue amoebots indicate set $M$.
        The green amoebots indicate set $C$.
        The amoebots marked by a white dot indicate amoebots $u$ and $v$.
    }
    \label{fig:nubot}
\end{figure}

Woods et al.~\cite{DBLP:conf/innovations/WoodsCGDWY13} showed that in the nubot model, the robots are able to self-assemble arbitrary shapes/patterns in an amount of time equal to the worst-case running time for a Turing machine to compute a pixel in the shape/pattern plus an additional factor which is polylogarithmic in its size.
% Further in an extension, Chen et al.~\cite{DBLP:journals/nc/ChenXW15} introduced parallel computation with nubot model.
While we are able to perform the translations, we do not have the means to let amoebots appear and disappear in the amoebot model.
This prevents us from simulating the reconfiguration algorithm by Woods et al.~\cite{DBLP:conf/innovations/WoodsCGDWY13}.

\subsection{Reconfiguration Algorithms for Meta-Modules}

Naturally, our meta-modules allow us to simulate reconfiguration algorithms for lattice-type MRSs of similar shape if we can implement the same movement primitives.
This leads us to the following results.

\begin{theorem}
\label{th:formation:rhombical}
    There is a centralized reconfiguration algorithm for $m$ rhombical meta-modules that requires $O(\log m)$ rounds and performs $\Theta(m \log m)$ moves overall.
\end{theorem}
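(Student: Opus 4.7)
The plan is to invoke the shape formation algorithm of Aloupis et al.~\cite{DBLP:conf/isaac/AloupisCDLAW08} for the crystalline atom model and show that it can be simulated by our rhombical meta-modules with only a constant-factor overhead in both rounds and moves. Their algorithm rearranges $m$ crystalline atoms into any target shape (of the same mass) in $O(\log m)$ parallel steps while performing $\Theta(m \log m)$ individual atom movements in total, and it relies only on the slide, rotation, contraction/expansion, and $k$-tunnel primitives (together with the leapfrog/monkey jumps for certain reconfiguration steps).

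First I would verify that our rhombical meta-modules implement exactly this primitive set in place: by the lemmas in \Cref{sec:meta:rhombical}, the contraction/expansion take $1$ round, the reorientation $3$ rounds, the realignment $2$ rounds, the slide $2$ rounds, the rotation $2$ rounds, and the leapfrog/monkey jumps $O(1)$ rounds, while a $k$-tunnel step uses $k+1$ rounds. Since the simulation places meta-modules on the rhombical tesselation of \Cref{fig:meta:rhombical:tesselation}, an Aloupis-style configuration of crystalline atoms corresponds to a configuration of our meta-modules one-to-one, and each primitive applied to an atom is replaced by the same primitive applied to the corresponding meta-module, preceded (if necessary) by $O(1)$ rounds of reorientation or realignment so that the amoebots inside the meta-module point along the direction of motion.

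Next I would argue round complexity: Aloupis et al.\ use $O(\log m)$ parallel rounds, and in each such round every primitive used simultaneously on distinct meta-modules can be executed in parallel in our model, since the bond-removal and move-phase rules of \Cref{sec:model:movements} are local and the meta-modules involved in different primitives are disjoint. Each primitive costs $O(1)$ amoebot rounds, except for tunnel steps whose length sums to $O(1)$ per round across the whole execution in the Aloupis et al.\ algorithm (they only use short $k$-tunnels of constant $k$). Hence the total number of amoebot rounds is $O(\log m)$. For the move count, every primitive on a meta-module involves $O(1)$ amoebots actually performing expansions/contractions per meta-module cell traversed (the remaining amoebots merely ride along as a result of joint movement), so a total of $\Theta(m \log m)$ atom moves in the crystalline algorithm translates into $\Theta(m \log m)$ meta-module moves; the $\Omega(m \log m)$ lower bound is inherited from the same instances that force it in the crystalline model.

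The main obstacle I anticipate is the move-complexity bookkeeping, not the round complexity: one must confirm that replacing a single crystalline atom move by a rhombical primitive does not blow up the amortized move count. In particular, the slide and tunnel primitives drag entire submodules along for the ride, so one has to charge only the $O(1)$ contracting/expanding amoebots per primitive (rather than the $\Theta(\ell^2)$ amoebots of a meta-module) to the $\Theta(m\log m)$ budget, which is valid because only those amoebots actually change state. Once this accounting is fixed, the theorem follows by direct simulation.
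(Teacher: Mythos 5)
Your proposal takes essentially the same route as the paper: both reduce the theorem to the crystalline-atom shape formation algorithm of Aloupis et al.\ and observe that the primitives it uses (contraction, slide, tunnel) are implemented in $O(1)$ rounds by the rhombical meta-modules, so the round and move bounds carry over directly. The paper's proof is in fact terser than yours --- it does not invoke the leapfrog/monkey jumps (those come from Akitaya et al., not Aloupis et al.) and leaves the per-primitive accounting implicit --- but your additional bookkeeping is consistent with it and does not change the argument.
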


\begin{proof}
    Aloupis et al.\ \cite{DBLP:conf/isaac/AloupisCDLAW08} proposed a reconfiguration algorithm for crystalline atoms.
    It requires $O(\log m)$ rounds and performs $\Theta(m \log m)$ moves overall.
    The idea is to transform the initial structure to a canonical structure using a divide and conquer approach.
    The target structure is reached by reversing that procedure.
    We refer to \cite{DBLP:conf/isaac/AloupisCDLAW08} for the details.
    The algorithm utilizes the contraction, slide and tunnel primitives which our rhombical meta-modules are capable of (see \Cref{sec:meta:rhombical}).
    Hence, they can simulate this reconfiguration algorithm.
\end{proof}

\begin{theorem}
\label{th:formation:hexagonal}
    There is a centralized reconfiguration algorithm for $m$ hexagonal meta-modules that requires $O(m)$ rounds. Each module has to perform at most $O(m)$ moves.
\end{theorem}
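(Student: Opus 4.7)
The plan is to mirror the strategy used for Theorem~\ref{th:formation:rhombical}: identify a known shape formation algorithm for a hexagonal lattice-type MRS whose only movement primitive is pivoting/rotation of one hexagonal module around another, and then argue that our hexagonal meta-modules can simulate each of its moves using the rotation primitive from Section~\ref{sec:meta:hexagonal} in $O(1)$ rounds per simulated move. A natural candidate is the shape formation procedure for hexagonal metamorphic robots of Chirikjian~\cite{DBLP:conf/icra/Chirikjian94} (and subsequent work on pivoting hexagonal MRSs), which is known to produce any target shape of $m$ modules in $O(m)$ rounds, with each module performing $O(m)$ pivots over the course of the algorithm. Since a single rotation in the source model becomes a constant number of rotation (and, if needed, switching) primitives on our meta-modules, the overall bound of $O(m)$ rounds is preserved and each meta-module moves $O(m)$ times.

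Concretely, I would proceed as follows. First, I would recall that our hexagonal meta-modules tile the plane (see Figure~\ref{fig:meta:hexagonal:tesselation}) and that the rotation primitive (Figure~\ref{fig:meta:hexagonal:rotate:simplified}) lets a meta-module $H_1$ rotate around a neighboring pivot meta-module $H_2$ while maintaining global connectivity of $G_R$. Second, I would note that the only preprocessing required before a rotation is (i) aligning the interface rhombical meta-module $R_1 \subset H_1$ with the corresponding $R_2 \subset H_2$, and (ii) possibly invoking the switching primitive so that the right rhombical submodule sits at the interface. Both of these take $O(1)$ rounds by the lemmas of Section~\ref{sec:meta:hexagonal}. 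Third, I would verify that the connectivity conflict and collision avoidance conditions of Section~\ref{sec:model:movements} are satisfied for the simulated move whenever they are satisfied in the source model, which follows because the source model's pivot is feasible exactly when the target cell and the swept region are empty, matching the free-space requirement of our rotation.

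Finally, I would plug the simulation into the source algorithm: for each of its $O(m)$ parallel rounds, we pay an $O(1)$ overhead per participating meta-module, giving $O(m)$ rounds overall; for each module that performs $O(m)$ pivots in the source algorithm, the number of physical rotation and switching primitives remains $O(m)$.

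The main obstacle, and the part that deserves the most care, is handling the local state of a meta-module as it moves: after each rotation the rhombical submodules inside a hexagonal meta-module may end up misaligned with the axes that the next rotation requires, and an unlucky sequence of pivots could in principle force a switching primitive before every move. I would therefore argue, using the invariant that each rotation only reorients a meta-module by a fixed angle and that the switching primitive costs only $O(1)$ rounds, that these realignments can always be absorbed into the constant per-move overhead. Once this bookkeeping is in place, the $O(m)$-round and $O(m)$-moves-per-module bounds follow directly from the bounds of the simulated hexagonal algorithm.
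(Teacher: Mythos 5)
Your proposal takes essentially the same route as the paper: the paper's proof simulates a known hexagonal-lattice shape formation algorithm (specifically Hurtado et al.~\cite{DBLP:journals/arobots/HurtadoMRA15}, which computes a spanning tree, moves modules along its boundary to a leader to form a canonical shape, and reverses) using the hexagonal rotation primitive with $O(1)$ overhead per move, exactly as you describe. The only difference is your choice of source algorithm (Chirikjian's metamorphic robots rather than Hurtado et al.), and you should double-check that the $O(m)$-rounds/$O(m)$-moves-per-module bounds you attribute to it actually hold there; with the Hurtado et al.\ reference the argument goes through as you wrote it.
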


\begin{proof}
    Hurtado et al.\ \cite{DBLP:journals/arobots/HurtadoMRA15} proposed a reconfiguration algorithm for hexagonal robots.
    It requires $O(m)$ rounds and each module has to perform at most $O(m)$ moves.
    The idea is to compute a spanning tree and to move the robots along the boundary of the tree to a leader module where the robots form a canonical structure, e.g., a line.
    The target structure is reached by reversing that procedure.
    We refer to \cite{DBLP:journals/arobots/HurtadoMRA15} for the details.
    The algorithm utilizes the rotation primitive which our hexagonal meta-modules are capable of (see \Cref{sec:meta:hexagonal}).
    Hence, they can simulate this reconfiguration algorithm.
\end{proof}

%%%%%%%%%%%%%%%%%%%%%%%
% Reconfiguration END %
%%%%%%%%%%%%%%%%%%%%%%%

\section{Locomotion}
\label{sec:motion}

In this section, we consider amoebot structures capable of locomotion along an even surface.
% ``[L]ocomotion is achieved by repeatedly performing the following three steps in any order: removing one or more rear ground contact points (GCP), shifting weight forward, adding one or more GCPs in the front [...]. A cyclical pattern to easily repeat all three steps is called a gait.'' \cite{DBLP:conf/icra/Yim94}
There are three basic types of terrestrial locomotion: rolling, crawling, and walking \cite{hirose1991three,DBLP:conf/embc/KehoeP19}.
We can find biological and artificial examples for each of those.
In the following subsections, we will present an amoebot structure for each type and analyze their velocity.
In the last subsection, we discuss the transportation of objects.
All constructions are locally and globally stable unless stated otherwise.

%%%%%%%%%%%%%%%%%
% ROLLING BEGIN %
%%%%%%%%%%%%%%%%%

\subsection{Rolling}
\label{sec:motion:rolling}
% Alternative titles: tracked vehicle, loop, rolling, closed chain, wheel, crawler track, rolling track

%%% Animals

Animals and robots that move by rolling either rotate their whole body or parts of it.
Rolling is rather rare in nature.
Among others, spiders, caterpillars, and shrimps are known to utilize rolling as a secondary form of locomotion during danger \cite{armour2006rolling}.
Bacterial flagella are an example for a creature that rotates a part of its body around an axle \cite{labarbera1983wheels}.

In contrast, rolling is commonly used in robotic systems mainly in the form of wheels.
An example of a robot system rolling as a whole are chain-type MRSs that can roll by forming a loop, e.g., Polypod \cite{DBLP:conf/icra/Yim94}, Polybot \cite{DBLP:conf/icra/YimDR00,DBLP:journals/arobots/YimRDZEH03}, CKBot \cite{DBLP:conf/icra/MellingerKY09,DBLP:journals/ijrr/SastraCY09}, M-TRAN \cite{DBLP:conf/cira/YoshidaMKTKK03,DBLP:journals/ras/KurokawaYTKMK06}, and SMORES \cite{DBLP:conf/rss/JingTYK16,DBLP:journals/arobots/JingTYK18}.
Further, examples for rolling robots can be found in \cite{armour2006rolling,brunete2017current}.

%%% Continuous track

Our rolling amoebot structure imitates a continuous track that rotates around a set of wheels.
Continuous tracks are deployed in various fields, e.g., construction, agriculture, and military.
We build our \emph{continuous track structure} from hexagonal meta-modules of alternating side lengths $\ell$ and $\ell - 1$ (see \Cref{fig:motion:loop:simplified}).
The structure consists of two parts: a connected substrate structure (green meta-modules), and a closed chain of meta-modules rotating along the outer boundary of the substrate (blue meta-modules).
% \cyan{In the full version of this paper, we generalize this structure (see \Cref{app:motion:rolling}).}

\begin{figure}[!tb]
    \centering
    \includegraphics[page=1]{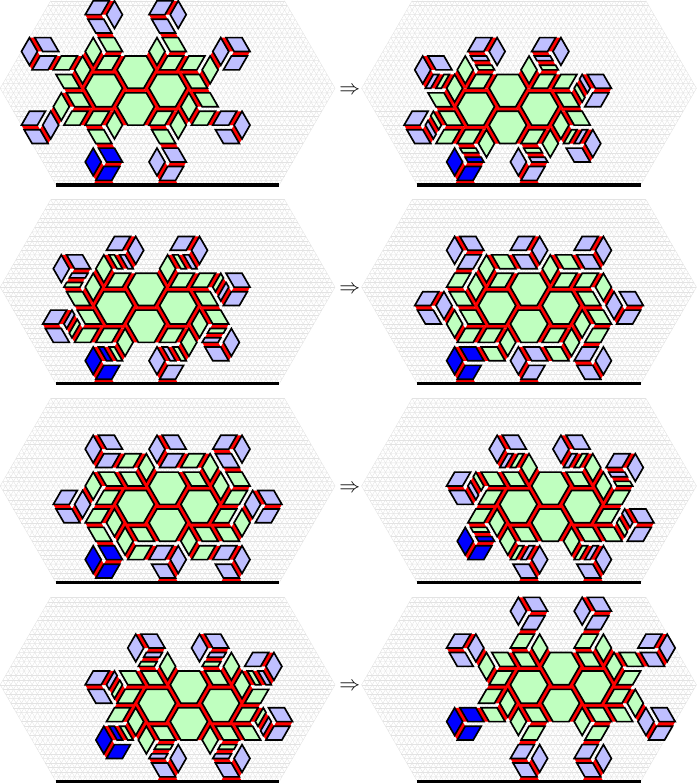}
    \caption{
        Continuous track structure.
        The blue meta-modules rotate clockwise around the green meta-modules.
        We highlight one of the rotating meta-modules in a darker blue.% to make the rotation more clear.
        % The green meta-modules serve as the substrate during the execution of the rotation primitives.
    }
    \label{fig:motion:loop:structure}
    \label{fig:motion:loop:simplified}
\end{figure}

%%% Movement

The continuous track structure moves as follows.
The rotating meta-modules that are in contact with the surface release all such bonds with the surface if they rotate away from the surface (see the dark blue meta-module in the third round).
Otherwise, they keep these bonds such that the substrate structure is pushed forwards.
Note that we have to apply the switching primitive between the rotations.
We obtain the initial structure after two rotations.
In doing so, the structure has moved a distance of $2 \cdot \ell$.
By performing the movements periodically, we obtain the following theorem.

\begin{theorem}
    Our continuous track structure composed of hexagonal meta-modules of alternating side lengths $\ell$ and $\ell - 1$ moves a distance of $2 \cdot \ell$ within each period of constant length.
\end{theorem}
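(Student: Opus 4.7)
The plan is to analyze one period of motion, establish that it consists of a constant number of invocations of movement primitives whose correctness has already been established (the hexagonal rotation primitive and the switching primitive), and finally show that the net effect of one period is a pure translation of the entire structure by distance $2\ell$ along the surface.

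First I would define one period formally as the following block: (i) apply the switching primitive on every rotating meta-module whose orientation needs to be re-arranged so that its pivot rhombical meta-module is aligned with the corresponding rhombical meta-module of its substrate neighbor (as required by the hexagonal rotation primitive); (ii) apply the hexagonal rotation primitive in parallel to every rotating meta-module; (iii) repeat (i)–(ii) once more. By the earlier lemmas this takes $3+2+3+2=O(1)$ rounds. The rotating meta-modules that are currently in contact with the surface release their bonds to the surface precisely when they are leaving the ground side of the loop, and maintain them when they are on the ground side. Consequently, for those ground-side meta-modules the pivot of the hexagonal rotation primitive is anchored to the ground, and the ``rotation'' of the blue meta-module around the fixed green substrate meta-module is equivalent, in the frame of the substrate, to a translation of the substrate itself by one hexagonal step in the opposite direction, i.e.\ by a vector of length $\ell$ in the forward direction along the surface.

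Next I would verify feasibility. Connectivity is preserved because along the closed chain each rotating meta-module remains adjacent to its two neighbors on the loop before and after each rotation step (with the pivot neighbor staying attached throughout), and each ground-side meta-module remains attached to the substrate during its rotation. No collision or structural conflict occurs because the hexagonal rotation primitive is executed in place around disjoint pivot meta-modules spaced $\ell$ apart along the loop, and the switching primitive only affects the interior of a meta-module. These are exactly the hypotheses under which the earlier lemmas in Section~\ref{sec:meta:hexagonal} guarantee the primitives succeed.

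Finally I would argue the geometry. Because the loop is closed, after two consecutive rotation blocks every rotating meta-module has advanced by two hexagonal steps along the boundary of the substrate, while the substrate as a whole has translated in the forward direction by the sum of the two displacements produced by the ground-side rotations. Each such rotation contributes a forward displacement of $\ell$ (the distance between the two hexagonal positions in the tesselation of Figure~\ref{fig:meta:hexagonal:tesselation}), yielding a total forward displacement of $2\ell$ per period. After this period the structure is in a configuration isomorphic, as a labeled set of meta-modules, to the initial one (the role of each rotating meta-module has shifted by two positions along the loop), so the procedure can be repeated indefinitely. The main obstacle I anticipate is the careful bookkeeping in step (i): ensuring that the switching and reorientation operations needed to align the pivot rhombi are themselves collision-free when executed in parallel on all rotating meta-modules, which follows because each switching primitive is contained inside a single hexagonal meta-module and the spacing of the hexagonal tesselation leaves room for the internal handovers described in Section~\ref{sec:meta:hexagonal}.
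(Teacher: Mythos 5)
Your proposal follows essentially the same argument as the paper: one period consists of two rotation steps of the closed chain around the substrate (with switching primitives interleaved), the ground-contact meta-modules keep their surface bonds so that each rotation translates the substrate forward by $\ell$, and the structure returns to its initial configuration after two rotations, giving $2\ell$ per constant-length period. The paper's own justification is just the informal description preceding the theorem, so your write-up is a somewhat more explicit rendering of the identical mechanism rather than a different route.
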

% \ap{Count the number of rounds?}

% \begin{proof}
%     The structure performs all movements periodically where each period takes $O(1)$ rounds (see \Cref{fig:motion:loop:simplified}).
% \end{proof}

%%% Alternative

Butler et al.\ \cite{DBLP:journals/ijrr/ButlerKRT04} have proposed another rolling structure for the sliding-cube model that we are able to simulate.
The structure resembles a swarm of caterpillars where caterpillars climb over each other from the back to the front \cite{dolev2016vivo}.
However, due to stalling times, this structure is slower than our continuous track structure.
We refer to \cite{DBLP:journals/ijrr/ButlerKRT04} for the details.

%%%%%%%%%%%%%%%
% ROLLING END %
%%%%%%%%%%%%%%%

%%%%%%%%%%%%%%%%%%
% CRAWLING BEGIN %
%%%%%%%%%%%%%%%%%%

\subsection{Crawling}
\label{sec:motion:crawling}
% Alternative title: earthworm

%%%% INTRODUCTION

Crawling locomotion is used by limbless animals.
According to \cite{juhasz2013analysis}, crawling can be classified into three types: worm-like locomotion, caterpillar-like locomotion, and snake-like locomotion.
% It can be classified into three types: earthworm-like locomotion, inchworm-like locomotion, and snake-like locomotion \cite{juhasz2013analysis, DBLP:journals/robotica/TianYDX16}.
We will explain the earthworm-like locomotion below and refer to \cite{juhasz2013analysis} for the other two types.
Due to the advantage of crawling in narrow spaces, various crawling structures have been developed for MRSs, e.g., crystalline atoms \cite{DBLP:conf/iser/KotayRV00,DBLP:conf/icra/RusV99,DBLP:journals/arobots/RusV01}, catoms \cite{DBLP:journals/ijrr/CampbellP08,DBLP:conf/icra/ChristensenC07}, polypod \cite{DBLP:conf/icra/Yim94}, polybot \cite{DBLP:journals/arobots/YimRDZEH03,DBLP:conf/cira/ZhangYEDR03}, M-TRAN \cite{DBLP:journals/ras/KurokawaYTKMK06,DBLP:conf/cira/YoshidaMKTKK03}, and origami robots \cite{DBLP:journals/robotics/KalairajCSR21}.

%%%% EARTHWORM

Our crawling amoebot structure imitates earthworms.
An earthworm is divided into a series of segments.
It can individually contract and expand each of its segments.
Earthworms move by peristaltic crawling,
i.e., they propagate alternating waves of contractions and expansions of their segments from the anterior to the posterior part.
The friction between the contracted segments and the surface gives the worm grip.
This anchors the worm as other segments expand or contract.
The waves of contractions pull the posterior parts to the front,
and the waves of expansions push the anterior parts to the front.
\cite{juhasz2013analysis,DBLP:conf/iros/OmoriHN08}

%%%% SIMPLE LINE

The simplest amoebot structure that imitates an earthworm is a simple line of $n$ expanded amoebots along the surface (see \Cref{fig:motion:worm:simple}).
Each amoebot can be seen as a segment of the worm.
Instead of propagating waves of contractions and expansions,
we contract and expand the whole structure at once.
During each contraction (expansion), we release all bonds between the amoebot structure and the surface except for the ones at the head (tail) of the structure that serve as an anchor.
As a result, the contraction (expansion) pulls (pushes) the structure to the front.
The simple line has moved by a distance of $n$ along the surface after performing a contraction and an expansion.
This is the fastest way possible to move along a surface
since we accumulate the movements of all amoebots into the same direction.
By performing the movements periodically, we obtain the following theorem.

\begin{figure}[tb]
    \centering
    \includegraphics[page=1]{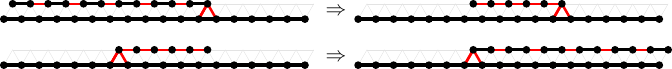}
    \caption{
        Simple line of amoebots.
    }
    \label{fig:motion:worm:simple}
\end{figure}

\begin{theorem}
    A simple line of $n$ expanded amoebots moves a distance of $n$ every $2$ rounds.
\end{theorem}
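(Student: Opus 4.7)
The plan is to prove the theorem by an explicit two-round schedule and then argue it satisfies every requirement of the joint movement extension (connectivity, collision-freeness, existence of the mapping to $\Geqt$) and produces the claimed translation.

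First I would fix coordinates. Place the line along one axis of $\Geqt$, label the amoebots $a_1, \dots, a_n$ from tail to head in the direction of motion, and note that each $a_i$ initially occupies two adjacent nodes, so the structure occupies $2n$ consecutive nodes along the axis. I would also assume a bond from the head node of $a_n$ to the surface directly underneath it, and analogously a bond from the tail node of $a_1$ to the surface; these surface bonds are what we will toggle to implement the anchoring.

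Round one (contraction). Every amoebot $a_i$ contracts into its front node. Before the move I release every bond between the line and the surface \emph{except} the one at the head of $a_n$, which keeps the head pinned. All inter-amoebot bonds along the line are kept, so $G_R$ is connected. Because each contraction shrinks the segment $a_i$ by a unit toward the head, the head node of $a_n$ stays fixed while every other amoebot is translated forward by a distance proportional to its index; in particular the tail end of $a_1$ advances by exactly $n$. I would check that no collision occurs by observing that at every $t \in [0,1]$ the whole chain is a straight segment of length $2n-1-nt$ lying on the axis, so no two bonds ever cross, and that the unique mapping of $G_M$ onto $\Geqt$ compatible with the pinned head is the translated one.

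Round two (expansion). Each amoebot expands along the same axis into its front unoccupied node. Now I release every surface bond \emph{except} the one at the tail of $a_1$, which anchors the rear. Again $G_R$ is connected along the line. By symmetry with round one, the tail stays fixed and each amoebot pushes the amoebots in front of it forward; the head of $a_n$ advances by exactly $n$. Collision-freeness follows as before because the chain stays straight throughout.

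Concatenating the two rounds, the head and tail have both advanced by $n$, so the entire line has undergone a pure translation of distance $n$ in two rounds; iterating gives the periodic behaviour. The only delicate point, and the one I would emphasize, is the role of the anchor: the joint movement rule only determines the new amoebot positions up to a global translation, and without the single retained surface bond at the pinned end we would have no guarantee that the structure advances rather than retreats. Once the anchor is in place the translation is forced, so this is the place where I would be most careful in the writeup.
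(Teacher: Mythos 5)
Your schedule is exactly the paper's construction: contract all amoebots while anchoring only the head's surface bond, then expand all while anchoring only the tail's, so each end advances by $n$ over the two rounds and the line translates by $n$. The paper presents this informally in the surrounding text rather than as a formal proof; your writeup adds the coordinate bookkeeping, connectivity/collision checks, and the observation that the retained anchor bond is what pins down the otherwise translation-ambiguous mapping, all of which is correct and consistent.
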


% \begin{proof}
%     The structure performs all movements periodically where each period takes $2$ rounds (see \Cref{fig:motion:worm:simple}).
% \end{proof}

%%%% LINE OF RHOMBICAL META-MODULES

However, in practice, the contractions and expansions of the whole structure yield high forces acting on the connections within the amoebot structure.
We can address this problem by thickening the worm structure.
This increases the expansion of the structure and with that its stability.
Consider a line of $r$ rhombical meta-modules of side length $\ell - 1$ (see Figure~\ref{fig:motion:worm:simplified}).
Each module can be seen as a segment of the worm structure.
% Recall that we can contract a rhombical meta-module into a parallelogram of side lengths $\ell - 1$ and $\ell/2 - 1$ (see \Cref{fig:meta:rhombical:contraction}).
Recall that we can contract a rhombical meta-module into a parallelogram (see \Cref{fig:meta:rhombical:contraction}).
The line of rhombical meta-modules moves in the same manner as the simple line except for the following two points.
First, we utilize the whole meta-module at the front and at the end as an anchor to increase the grip, respectively.
Second, only the middle $r-2$ meta-modules participate in the contractions and expansions.
The line of rhombical meta-modules moves a distance of $\frac{r-2}{2} \cdot \ell$ along the surface after performing a contraction and an expansion.
By performing the movements periodically, we obtain the following theorem.

\begin{figure}[tb]
    \centering
    \includegraphics[page=1]{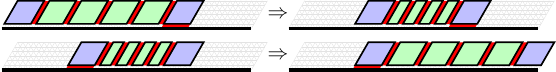}
    \caption{
        Line of rhombical meta-modules.
    }
    \label{fig:motion:worm:simplified}
\end{figure}

\begin{theorem}
    A line of $r$ rhombical meta-modules of side length $\ell - 1$ moves a distance of $\frac{r-2}{2} \cdot \ell$ every $2$ rounds.
\end{theorem}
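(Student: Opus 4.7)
The plan is to track how much the worm lengthens or shortens in each of the two rounds and conclude from the anchoring pattern how the endpoints translate. First I would quantify the contraction of a single meta-module along the worm axis. A rhombical meta-module of side length $\ell - 1$ consists of $\ell/2$ expanded amoebots per row in each direction (as defined in \Cref{sec:meta:rhombical}), so the meta-module spans $\ell - 1$ bond-lengths along the worm axis. After the contraction primitive is applied, the resulting parallelogram has side length $\ell/2 - 1$ along that same axis. Hence each contracting meta-module shortens by exactly $\ell/2$, and each expanding one lengthens by exactly $\ell/2$.

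Next I would analyze the first round (global contraction). By assumption the head meta-module keeps all of its bonds with the surface and does not contract, so it remains fixed in space. All surface bonds under the interior meta-modules and the tail are released, while internal bonds between consecutive meta-modules of the worm are retained so that $G_R$ stays connected. The $r-2$ interior meta-modules then contract simultaneously, shortening the worm by a total of $(r-2)\cdot \ell/2$. Since the head is pinned, the entire suffix starting right after the head (i.e., the interior meta-modules together with the tail) must translate forwards by exactly $(r-2)\cdot \ell/2$ in order to absorb the shortening.

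In the second round (global expansion) the roles are swapped: the tail, now at its new position, retains its surface bonds and serves as the anchor, while the head and all interior meta-modules release their surface bonds. The same $r-2$ interior meta-modules expand back to rhombi, lengthening the worm by $(r-2)\cdot \ell/2$. Because the tail is now pinned, the head and the interior meta-modules must translate forwards by exactly $(r-2)\cdot \ell/2$. Combining the two rounds, every meta-module of the worm has translated forwards by $\frac{r-2}{2}\cdot \ell$ over one period of $2$ rounds, which is the claimed displacement.

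The main point to double-check is feasibility rather than the arithmetic: namely, that the remaining bond graph $G_R$ is connected in each round (this holds because the worm is a single line and we keep all inter-meta-module bonds along that line together with the surface bonds at one endpoint), and that no structural conflict or collision arises when all $r-2$ interior meta-modules simultaneously contract or expand along the same axis. The latter follows from \Cref{sec:meta:rhombical}, where the contraction and expansion primitives for a rhombical meta-module were shown to be valid moves in a single round; since the worm is one-dimensional and all interior meta-modules move along the same axis with the same orientation, their simultaneous application composes without interference, which closes the argument.
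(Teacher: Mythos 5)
Your proposal is correct and matches the paper's argument: the paper likewise anchors the front meta-module during the simultaneous contraction of the $r-2$ interior modules and the rear one during the expansion, with each module shortening by $(\ell-1)-(\ell/2-1)=\ell/2$, giving a net translation of $\frac{r-2}{2}\cdot\ell$ per two-round period. Your added feasibility check (connectivity of $G_R$ and conflict-freeness of the simultaneous axis-aligned contractions) is only implicit in the paper but is consistent with it.
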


% \begin{proof}
%     The structure performs all movements periodically where each period takes $2$ rounds (see \Cref{fig:motion:worm:simplified}).
% \end{proof}

%%%% PROBLEM: FRICTION

Note that unless the number of amoebots resp.\ meta-modules is constant, neither amoebot structure in the subsection is locally stable since we have to remove almost all bonds between the structure and the surface.
Another problem in practice that comes with the local instability is friction between the structure and the surface and with that the wear of the structure.
The worm structure is therefore poorly scalable in its length such that other types for locomotion are more suitable for large amounts of amoebots.

%%%% SOMPARISON TO MRS

Most of the cited MRSs at the beginning of this section are very similar to our construction.
The construction for crystalline robots is the closest one.
Each of these consists of a line of (meta-)modules that are able to contract.
Katoy et al.~\cite{DBLP:conf/iser/KotayRV00} also propose a `walking' structure (see \Cref{fig:motion:inchworm:simplified}).
However, the locomotion is still caused by the contraction of the body instead of motions of the legs.
So, it is rather a caterpillar-like crawling than a walking movement.

\begin{figure}[tb]
    \centering
    \includegraphics[page=1]{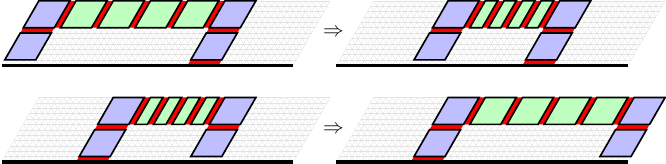}
    \caption{
        Caterpillar structure.
        This structure is a replication of the structure by Katoy et al.\ \cite{DBLP:conf/iser/KotayRV00} in the joint movement extension.
    }
    \label{fig:motion:inchworm:simplified}
\end{figure}

%%%%%%%%%%%%%%%%
% CRAWLING END %
%%%%%%%%%%%%%%%%

%%%%%%%%%%%%%%%%%
% WALKING BEGIN %
%%%%%%%%%%%%%%%%%

\subsection{Walking}
\label{sec:motion:walking}
% Alternative title: millipede

%%% INTRODUCTION

A wide variety of animals are capable of walking locomotion, e.g., mammals, reptiles, birds, insects, millipedes, and spiders.
Just as wide is the variety of differences, e.g., they differ in the number of legs, in the structure of the legs, and in their gait.
Walking structures have been built for chain-type MRSs, e.g., M-TRAN \cite{DBLP:conf/cira/YoshidaMKTKK03,DBLP:journals/ras/KurokawaYTKMK06} and polybot \cite{DBLP:journals/arobots/YimRDZEH03,DBLP:conf/cira/ZhangYEDR03}.

%%% MILIPEDES

Our walking amoebot structure imitates millipedes.
Millipedes have flexible, segmented bodies with tens to hundreds of legs that provide morphological robustness \cite{DBLP:journals/ral/ShaoDLTSLZ22}.
They move by propagating leg-density waves from the posterior to the anterior \cite{Garcia_2021,Kuroda_2022}.
We build our \emph{millipede structure} from rhombical meta-modules of side length $\ell - 1$ (see \Cref{fig:motion:millipede:simplified}).
Let $p$ denote the number of legs.
The body and each leg consists of a line of rhombical meta-modules.
All legs have the same size.
Let $q$ denote the number of rhombical modules in each leg.
We attach each leg to a meta-module of the body and orientate them alternating to the front and to the back.
We call those meta-modules \emph{connectors}.
% In order to prevent the legs from colliding, we require at least $q$ meta-modules between two connectors.
% Consequently, the body is composed of at least $(p-1) \cdot q + p$ meta-modules.
% Altogether, the millipede structure consists of at least $(2p - 1) \cdot q + p$ meta-modules.
% Note that the construction could be generalized further by adding parameters for the thickness of the legs and the body, respectively.
In order to prevent the legs from colliding, we place $q$ meta-modules between two connectors.
% Consequently, the body is composed of at least $(p-1) \cdot q + p$ meta-modules.
Altogether, the millipede structure consists of $(2p - 1) \cdot q + p$ meta-modules.

\begin{figure}[tb]
    \centering
    \includegraphics[page=1]{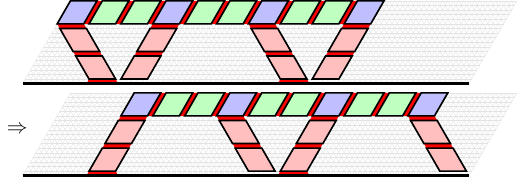}
    \caption{
        Millipede structure.
        The arrow indicates two rounds.
        The red rhombi indicate the legs.
        The blue and green rhombi indicate the body.
        The blue rhombi indicate the connectors.
        Note that the figure only depicts the first half of a period.
    }
    \label{fig:motion:millipede:simplified}
\end{figure}

%%% MOVEMENT

In order to move the legs back and forth, we simply apply the realignment primitive (see \Cref{fig:meta:rhombical:realignment}) on all meta-modules within the legs (see \Cref{fig:motion:millipede:simplified}).
We achieve forward motion by releasing all bonds between the surface and the legs moving forwards.
In one step, the body moves a distance of $q \cdot \ell$.
Note that the number of legs has no impact on the velocity.
By continuously repeating these leg movements, we achieve a motion similar to the leg-density waves of millipedes.
Note that we reach the initial amoebot structure after two leg movements.
Hence, we obtain the following theorem.

\begin{theorem}
    Our millipede structure composed of rhombical meta-modules of side length $\ell - 1$ with $p$ legs composed of $q$ rhombical meta-modules moves a distance of $2 \cdot q \cdot \ell$ every $4$ rounds.
    % within each period of constant length.
\end{theorem}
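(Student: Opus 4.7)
The plan is to show that one period of the millipede's gait consists of two symmetric half-periods, each a single coordinated application of the realignment primitive with appropriate bond releases, which together translate the body by $2 \cdot q \cdot \ell$ in a constant number of rounds.

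First I would describe one half-period precisely. Recall the initial configuration: the $p$ legs alternate between pointing forward and backward, each with its tip touching the surface. In the move phase of the half-period, every leg meta-module executes the realignment primitive of \Cref{fig:meta:rhombical:realignment}, while the legs currently pointing forward release all bonds to the surface (they are \emph{lifted}) and the legs currently pointing backward keep their tip--surface bonds (they are \emph{anchored}). The text preceding the theorem already argues that this motion moves the body forward by $q \cdot \ell$: the anchored legs transmit force through their rigid chain of meta-modules up to the connectors, and because every leg is realigned the horizontal base-to-tip offset of each anchored leg flips sign, so the body is dragged forward by exactly $q \cdot \ell$. The lifted legs simultaneously reorient without constraining the body translation.

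Next I would iterate. After the first half-period, the forward- and backward-pointing legs have exchanged roles, so a second application of the same procedure (now releasing the bonds on the legs that are currently pointing forward, i.e.\ the previously anchored ones) restores the original alternating orientation pattern while pushing the body forward by another $q \cdot \ell$. Consequently one full period translates the body by $2 \cdot q \cdot \ell$ and brings the structure back to a forward translate of the initial configuration. Since the realignment primitive requires only $2$ rounds and the bond-release bookkeeping is uniform across legs, each half-period uses $O(1)$ rounds and the full period has constant length.

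The main obstacle I anticipate is verifying the absence of collisions and structural conflicts during the simultaneous motion of all legs and the body. For $t \in [0,1]$ each lifted leg of height $q$ sweeps over a region next to its connector, and I need to argue that this sweep is contained in the gap of $q$ body meta-modules separating consecutive connectors and does not intersect the surface or an adjacent anchored leg of opposite parity. Since the horizontal shift per leg is exactly $q \cdot \ell$ and the spacing between connectors was designed precisely for this reason, the check reduces to a short geometric interval argument. It remains to confirm that $G_R$ stays connected after the required bond releases, which holds because the body is never cut and each leg retains its internal bonds plus the bond at its connector.
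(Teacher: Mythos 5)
Your proposal matches the paper's argument: one period consists of two applications of the realignment primitive to all leg meta-modules, with one parity of legs kept bonded to the surface as anchors and the other parity released, each half-period translating the body by $q \cdot \ell$ in $O(1)$ rounds while the spacing of $q$ body meta-modules between connectors prevents leg collisions. The only quibble is a parity flip: you anchor the legs currently pointing backward (whose tips move \emph{forward} relative to the body under realignment) and lift the legs pointing forward, which is the reverse of the paper's convention of releasing the bonds of the legs moving forwards; this reverses the direction of travel relative to your own description but not the distance $2 \cdot q \cdot \ell$ per constant-length period, so the claim is unaffected.
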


% \begin{proof}
%     The structure performs all movements periodically where each period takes $O(1)$ rounds (see \Cref{fig:motion:worm:simplified}).
%     Note that the figure only shows the first half of a period.
% \end{proof}

In practice, we can reduce the friction by additionally lifting the legs moving forwards (see \Cref{fig:motion:millipede}).
For that, it suffices to partially contract all meta-modules of the body except for the connectors connected to a leg moving backwards.
After the movement, we lower the lifted legs back to the ground.
For that, we reverse the contractions within the body.
% \red{Note that we can also parallelize both movements (i.e., moving the legs forwards resp. backwards, and lifting and lowering the legs) within two rounds.}

\begin{figure}[!tb]
    \centering
    \includegraphics[page=1]{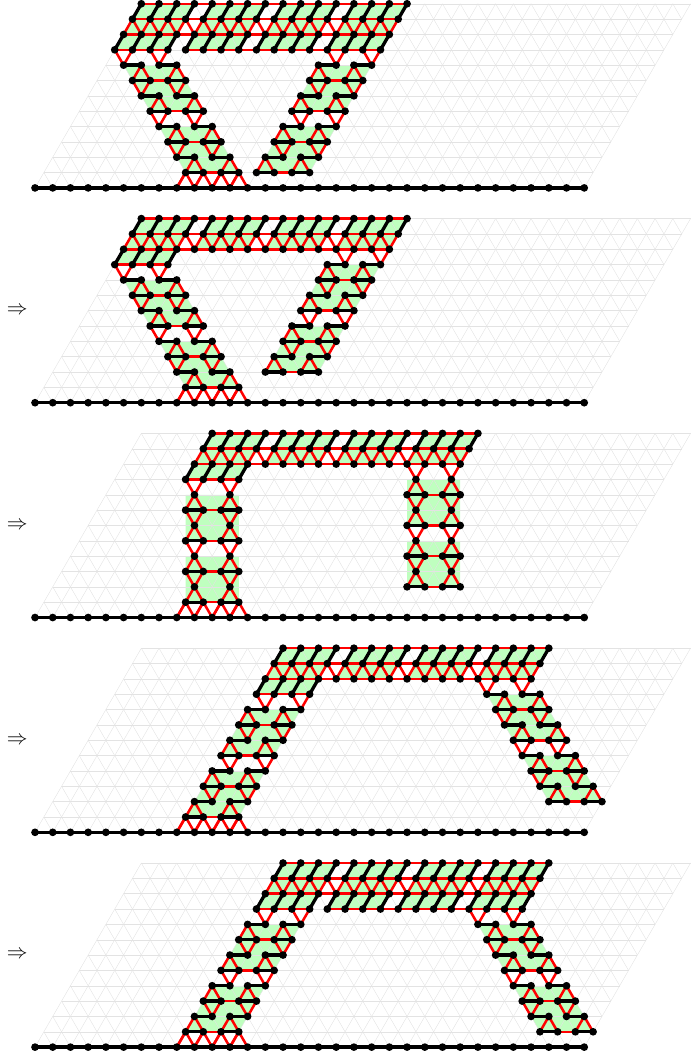}
    \caption{
        Movement of a millipede.
        For the sake of simplicity, we only show two legs.
    }
    \label{fig:motion:millipede}
\end{figure}

%%%%%%%%%%%%%%%
% WALKING END %
%%%%%%%%%%%%%%%

%%%%%%%%%%%%%%%%%%%%%%%%
% TRANSPORTATION BEGIN %
%%%%%%%%%%%%%%%%%%%%%%%%

\subsection{Transportation}
\label{app:transportation}

Another important aspect of locomotion is the transportation of objects.
The continuous track and worm structure are unsuitable for the transportation of objects due to their unstable top.
In the worm structure, we can circumvent that problem by increasing the number of meta-modules not participating in the contractions and expansions.
However, this decreases the velocity of the worm structure and increases the friction under the object.

In contrast, the millipede structure provides a rigid transport surface for the transportation of objects (see \Cref{fig:motion:millipede:transportation}).
In practice, a high load introduced by the object on the structure may lead to problems.
However, it was shown that millipedes have a large payload-to-weight ratio since they have to withstand high loads while burrowing in leaf litter, dead wood, or soil \cite{Garcia_2021,DBLP:journals/ral/ShaoDLTSLZ22}.
The millipede is able to distribute the weight evenly on its legs.
Consequently, the millipede structure is well suited for the transportation of objects.

\begin{figure}[tb]
    \centering
    \includegraphics[page=1]{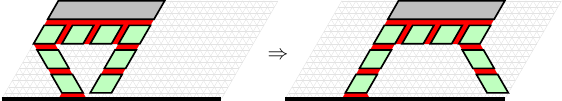}
    \caption{
        Transportation of objects by the millipede structure.
        The arrow indicates two rounds.
        The green rhombi indicate the millipede structure.
        The gray parallelogram indicates the object.
    }
    \label{fig:motion:millipede:transportation}
\end{figure}

%%%%%%%%%%%%%%%%%%%%%%
% TRANSPORTATION END %
%%%%%%%%%%%%%%%%%%%%%%

%%%%%%%%%%%%%%%%%%%%
% CONCLUSION BEGIN %
%%%%%%%%%%%%%%%%%%%%

\section{Conclusion and Future Work}
\label{sec:conclusion}

In this paper, we have formalized the joint movement extension that were proposed by Feldmann et al.~\cite{DBLP:journals/jcb/FeldmannPSD22}.
We have constructed meta-modules of rhombical and hexagonal shape that are able to perform various movement primitives.
This allows us to simulate reconfiguration algorithms of various MRSs.
However, our meta-modules are more flexible, e.g., we can move a hexagonal meta-module through two others (see \Cref{app:meta:hexagonal:wall}).

Such new movement primitives may lead to faster reconfiguration algorithms, e.g., sublinear solutions for hexagonal meta-modules or even arbitrary amoebot structures.
Furthermore, we have presented three amoebot structures capable of moving along an even surface.
In future work, movement on uneven surfaces can be considered.

% In this paper, we have formalized the joint movement extension that were proposed by Feldmann et al.~\cite{DBLP:journals/jcb/FeldmannPSD22}.
% We have constructed meta-modules of rhombical and hexagonal shape that are able to perform various movement primitives.
% This allows us to simulate reconfiguration algorithms of various MRSs.
% However, our meta-modules are more flexible, e.g., we can move a hexagonal meta-module through two others (see \Cref{app:meta:hexagonal:wall}).
% % In future work, new movement primitives can be designed.
% % This may lead to faster reconfiguration algorithms.
% Such new movement primitives may lead to faster reconfiguration algorithms, e.g., sublinear solutions for hexagonal meta-modules or even arbitrary amoebot structures.
% %
% Furthermore, we have presented three amoebot structures capable of moving along an even surface.
% In future work, movement on uneven surfaces can be considered.
% % This requires a more flexible structure that can adjust to the surface.

%%%%%%%%%%%%%%%%%%
% CONCLUSION END %
%%%%%%%%%%%%%%%%%%%

% \clearpage

\bibliography{literature}

\appendix

\clearpage

\section{Moving Meta-Modules through Others}
\label{app:meta:hexagonal:wall}

In this section, we show how to move a hexagonal meta-module $H_1$ through two other hexagonal meta-modules $H_2$ and $H_3$ (see \Cref{fig:meta:hexagonal:wall:2}).
We first form six rhombical meta-modules from $H_1$ and the amoebots of $H_2$ and $H_3$ that are located between the old and new position of $H_1$.
For that, we shift the amoebots along the red lines.
In order to shift the amoebots along the red lines, we first contract all amoebots on the line and then expand all amoebots on the line into the other direction.
Using the realignment primitive and the slide primitive for rhombical meta-modules, we shift the six rhombical meta-modules into the new position of $H_1$.
Finally, we restore $H_2$ and $H_3$ by shifting the amoebots along the red lines.
Note that we have omitted some necessary reorientations.
% This movement primitive requires $O(1)$ rounds.

\begin{lemma}
    We can move a hexagonal meta-module through two other hexagonal meta-modules within $O(1)$ rounds.
\end{lemma}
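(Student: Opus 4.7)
The plan is to realize the movement as a constant-length sequence of primitives from earlier sections, so that the $O(1)$ bound follows immediately from the per-primitive bounds.

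First, I would identify the region of interest as $H_1$ together with the ``corridor'' formed by the portions of $H_2$ and $H_3$ lying between the old and new positions of $H_1$. Viewing a hexagonal meta-module as an aggregate of three rhombical meta-modules (as in \Cref{sec:meta:hexagonal}), one can partition the amoebots in this region into six rhombical meta-modules aligned to a common pair of axes; the shifts needed to achieve this partition are exactly the amoebot movements indicated by the red lines in \Cref{fig:meta:hexagonal:wall:2}. Concretely, this preprocessing is a handover-based rearrangement together with, if needed, one application of the switching primitive to rearrange the three rhombi inside each of $H_2$ and $H_3$ so that the corridor decomposes cleanly. All of these steps have been analyzed in \Cref{sec:meta:hexagonal,sec:meta:rhombical} and each takes $O(1)$ rounds.

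Next, with the corridor re-expressed as a line of rhombical meta-modules on a rhombical tesselation, I would transport the ``$H_1$-part'' through this line using the slide and realignment primitives of \Cref{sec:meta:rhombical}, plus a reorientation if the axis of travel does not match the current alignment. Since the total number of rhombical meta-modules involved is exactly six and the number of slide/realignment applications required to traverse a corridor of this fixed length is constant, this stage also costs $O(1)$ rounds.

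Finally, I would invert the preprocessing: shift the amoebots back along the red lines (via handovers and, if needed, one switching step) to reassemble $H_2$ and $H_3$ at their original positions, and to reassemble $H_1$ as a proper hexagonal meta-module at its new position. Summing the three stages, each of which is a constant composition of primitives whose round complexity is $O(1)$ by the lemmas of \Cref{sec:meta:rhombical,sec:meta:hexagonal}, yields the claimed $O(1)$ bound.

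The main obstacle I expect is bookkeeping rather than difficulty: one must check that each intermediate configuration is free of structural conflicts and collisions, and that the connectivity graph $G_R$ remains connected in every round. In particular, whenever a rhombical submodule is being slid through the corridor, the surrounding amoebots of $H_2$ and $H_3$ must retain enough bonds to form a connected substrate. This is exactly the property used in the slide and $k$-tunnel analyses of \Cref{sec:meta:rhombical}, so the check reduces to verifying that the decomposition into six rhombical meta-modules is compatible with those primitives; the geometry of the hexagonal tesselation (\Cref{fig:meta:hexagonal:tesselation}) makes this straightforward, albeit tedious to draw in full.
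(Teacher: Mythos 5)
Your proposal follows essentially the same route as the paper: form six rhombical meta-modules from $H_1$ and the corridor amoebots of $H_2$ and $H_3$ by shifting along the red lines, transport them to the new position via the realignment and slide primitives (with the necessary reorientations), and then restore $H_2$ and $H_3$ by reversing the shifts. Each stage is a constant composition of $O(1)$-round primitives, which is exactly the paper's argument.
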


\begin{figure}[!tb]
    \centering
    \includegraphics[page=1]{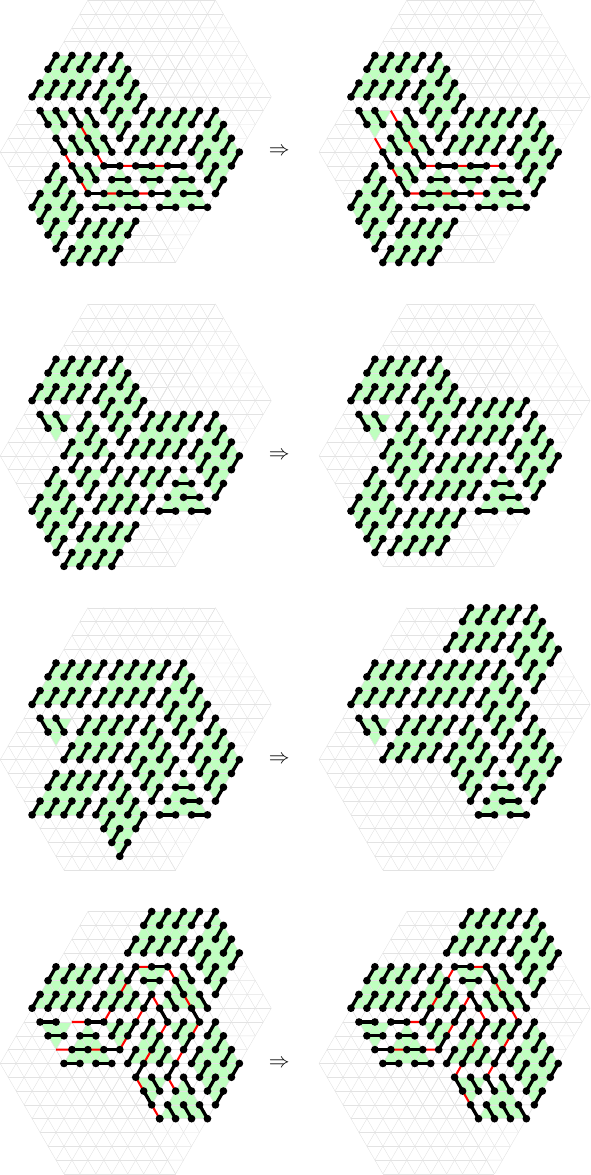}
    \caption{
        Moving a hexagonal meta-module through two other hexagonal meta-modules.
        The arrows indicate multiple rounds, respectively.
        For the sake of clarity, we have omitted the bonds in these figures.
        In the first and last step, the amoebots move along the red lines.
    }
    \label{fig:meta:hexagonal:wall:2}
\end{figure}

%%%%%%%%%%%%%%%%%%%%%%
% WALL PRIMITIVE END %
%%%%%%%%%%%%%%%%%%%%%%

\end{document}